\let\classAND\AND
\let\AND\relax
\let\AND\classAND
\pgfplotsset{compat=1.17}
\definecolor{custom_blue}{HTML}{1F77B4}
\definecolor{custom_pink}{HTML}{E377C2}
\definecolor{custom_orange}{HTML}{FF7F0E}
\definecolor{custom_purple}{HTML}{9467BD}
\definecolor{custom_green}{HTML}{2CA02C}
\definecolor{custom_green_review}{HTML}{18BE02}
\definecolor{custom_red}{HTML}{D62728}
\definecolor{custom_brown}{HTML}{8C564B}
\newcommand{\blue}{\raisebox{2pt}{\tikz{\draw[custom_blue, solid, line width=2.3pt](0,0) -- (5mm,0);}}}
\newcommand{\pink}{\raisebox{2pt}{\tikz{\draw[custom_pink, solid, line width=2.3pt](0,0) -- (5mm,0);}}}
\newcommand{\orange}{\raisebox{2pt}{\tikz{\draw[custom_orange, solid, line width=2.3pt](0,0) -- (5mm,0);}}}
\newcommand{\purple}{\raisebox{2pt}{\tikz{\draw[custom_purple, solid, line width=2.3pt](0,0) -- (5mm,0);}}}
\newcommand{\green}{\raisebox{2pt}{\tikz{\draw[custom_green, solid, line width=2.3pt](0,0) -- (5mm,0);}}}
\newcommand{\red}{\raisebox{2pt}{\tikz{\draw[custom_red, solid, line width=2.3pt](0,0) -- (5mm,0);}}}
\newtheorem{observation}{Observation} 
\newtheorem{definition}{Definition}
\newtheorem{theorem}{Theorem}
\newtheorem{lemma}[theorem]{Lemma} 
\newtheorem{assumption}[theorem]{Assumption}
\title{Mitigating Off-Policy Bias in Actor-Critic Methods with One-Step Q-learning: A Novel Correction Approach}
\author{\name Baturay Saglam \email baturay.saglam@yale.edu \\
      \addr Department of Electrical Engineering\\Yale University
      \AND
      \name Dogan C. Cicek \email cicek@ee.bilkent.edu.tr \\
      \name Furkan B. Mutlu \email burak.mutlu@ee.bilkent.edu.tr \\
      \name Suleyman S. Kozat \email kozat@ee.bilkent.edu.tr \\
      \addr Department of Electrical and Electronics Engineering\\Bilkent University\\}
\begin{document}

\maketitle

\begin{abstract}
Compared to on-policy counterparts, off-policy model-free deep reinforcement learning can improve data efficiency by repeatedly using the previously gathered data. However, off-policy learning becomes challenging when the discrepancy between the underlying distributions of the agent's policy and collected data increases. Although the well-studied importance sampling and off-policy policy gradient techniques were proposed to compensate for this discrepancy, they usually require a collection of long trajectories and induce additional problems such as vanishing/exploding gradients or discarding many useful experiences, which eventually increases the computational complexity. Moreover, their generalization to either continuous action domains or policies approximated by deterministic deep neural networks is strictly limited. To overcome these limitations, we introduce a novel policy similarity measure to mitigate the effects of such discrepancy in continuous control. Our method offers an adequate single-step off-policy correction that is applicable to deterministic policy networks. Theoretical and empirical studies demonstrate that it can achieve a ``safe'' off-policy learning and substantially improve the state-of-the-art by attaining higher returns in fewer steps than the competing methods through an effective schedule of the learning rate in Q-learning and policy optimization.
\end{abstract}

\section{Introduction}
In continuous action domains, deep reinforcement learning (RL) requires large amounts of data to develop optimal policies and scalable agents. Enhancing data and sampling efficiency often involves using a buffer, known as the experience replay memory~\citep{experience_replay}, to store and repeatedly use agents' experiences for training. The advantage of employing experience replay lies in its off-policy nature, where the stored samples generated by various behavioral policies differ from the target policy being optimized. However, this off-policy learning approach comes with potential pitfalls. Despite agents' past experiences possibly aiding future learning stages, the process neglects the diversity of policies that generated the samples. This off-policy error or bias can lead to sudden policy divergence, especially when combined with function approximation and bootstrapping - a problem emphasized by the deadly triad~\citep[chap.~11]{sutton_book}. Addressing this off-policy bias commonly involves disregarding transitions produced by behavioral policies uncorrelated to the target policy's distribution. This is known as off-policy correction~\citep{harutyunyan}, usually achieved through the importance sampling (IS) operator~\citep{owen, hesterberg}. This operator assigns weights to each sample or trajectory within temporal-difference (TD) learning~\citep{sutton_1988, precup_2001} based on the likelihood of the actions generated by the target agent's policy. Consequently, this allows for the weighting of off-policy data when calculating gradients for approximated value functions and policies, thereby refining the overall learning process.

\subsection{A Review of Prior Off-Policy Correction Methods}
IS methods, when applied to TD learning, have demonstrated efficiency in RL, whether used for single transitions~\citep{trust_region_estimators} or temporally correlated trajectories~\citep{dayan_watkins_q_learning, degris, munos_safe, harutyunyan, impala}. Despite their effectiveness, several challenges are present. For instance, trajectory-based methods can over-treat samples, leading to issues such as unnecessary trajectory terminations, high variance, biased estimates, and gradient instability~\citep{munos_safe, jmlr_2}. They can also increase the computational burden and risk of gradient explosion as the environment complexity scales~\citep{off_policy}. Conversely, single-step methods, which were designed for and tested in discrete control, may produce inaccurate importance weights~\citep{trust_region_estimators}, neglecting policy optimization in weight derivation. Furthermore, they are ill-suited for deterministic policy networks as they typically assume each action has a probability to be selected~\citep{workshop}, thereby leaving deterministic policy networks underexplored.

In continuous action spaces, bootstrapped Q-learning algorithms, based on TD learning methods, are commonly used for learning value functions~\citep{dqn,ddqn}. Off-policy corrections are typically achieved by leveraging off-policy policy gradient (PG) techniques~\citep{ipg, geoff_pac, ace} or IS methods that weigh the PG~\citep{ris}. Despite their utility, these techniques that were built on traditional on-policy deep PG methods often require modification for off-policy learning, implicitly prioritizing PG over off-policy data.

While the available correction techniques in continuous control tackle the off-bias problem from a policy optimization perspective, an alternative approach is to consider the off-policy data—following the practices in discrete action domains~\citep{harutyunyan, munos_safe, impala, trust_region_estimators}. This strategy mitigates the impact of off-policy data on the learning process, and, furthermore, uses full-returns or collected trajectories to accomplish off-policy correction. The assumption here is that a stochastic policy is in place.

In this paper, we identify two main limitations of existing importance sampling methods for off-policy correction: their limited applicability to deterministic policy networks (as we formally show in the following sections) and a tendency to undervalue the importance of the off-policy data. While these methods (i.e., IS in discrete control and off-policy PG) have proven successful, they often focus too much on the PG, which may overlook the significance of off-policy data. We delve deeper into these limitations and discuss how our proposed approach aims to address them in Related Work.

\subsection{AC-Off-POC: A Novel Approach}
To tackle the limitations identified, we introduce a novel algorithm: \emph{Actor-Critic Off-Policy Correction} (AC-Off-POC). Before delving into its technicalities, it is worth understanding its underlying philosophy and the manner in which how it rethinks the traditional approach to off-policy correction. AC-Off-POC adopts a fundamentally different perspective on the issue, directing the focus to the previously collected off-policy data by adjusting their impact on the training, independent of the PG technique used. One of its key characteristics is its ability to perform one-step correction on a per-transition basis, further eliminating the need for extended trajectories typical of the existing methods. Additionally, AC-Off-POC is tailored to operate with deterministic policies, dispensing with the assumption of stochasticity or the requirement of action probabilities, thus broadening its potential applications in the field of RL. This study presents several key contributions:
\begin{itemize}
    \item We derive a parameter-free, non-probabilistic policy similarity metric. This is accomplished through evaluating the numerical difference between the actions selected by the target and behavioral policies. Crucially, it brings standard on-policy PG techniques more in line with their inherent on-policy nature.
    \item Our algorithm operates on a randomly sampled batch of transitions in continuous action spaces. This process circumvents issues induced by the use of trajectories observed in traditional IS methods.
    \item AC-Off-POC can be readily integrated into actor-critic algorithms that employ one-step bootstrapped Q-learning and experience replay sampling methods, both current and forthcoming. This adaptability is further supported by empirical evidence.
    \item Our results indicate that our algorithm enhances state-of-the-art performance in most tasks, outperforming competing off-policy correction methods in challenging OpenAI Gym~\citep{gym} continuous control tasks.
    \item In the interest of reproducibility, we have made our source code publicly accessible via our GitHub repository\footnote{\label{our_repo}\url{https://github.com/baturaysaglam/AC-Off-POC}}.
\end{itemize}

\section{Related Work}
Here, we examine the prior studies regarding either their approach or the target domain of operation. To this end, we thoroughly compare our results with respect to the prior art and detail our contributions.

\subsection{Off-Policy Correction without Deep Function Approximation}
The initial studies in importance sampling and off-policy correction have been done by~\citet{precup_2000,learning_from_delayed_rewards}. They focused on eligibility traces and learning from delayed rewards, respectively. This evolved into more complex approaches such as weighted importance sampling with linear function approximators~\citep{weighted_is}, and multi-step expected return formulations employing Monte Carlo techniques for large state and action spaces~\citep{degris}. These developments further motivated variance reduction techniques such as the study of~\citet{precup_2000}. Recently,~\citet{tmlr_3} derived an off-policy evaluation scheme under out-of-sample conditions, that is, their approach can assess the performance of a policy using past observational data, where a large portion might be missing. This derivation looks out for practical cases in which reaccess to the training data is impossible due to critical or unethical concerns.  

Our work differs from these traditional methods in three key aspects. Firstly, our focus is on deep RL, while these methods are not developed or tested in the deep function approximation setting, which is a more challenging task due to convergence issues. Secondly, our algorithm employs one-step off-policy correction to individual transitions rather than trajectories, a distinction from the works of~\citet{precup_2000, degris}. This is significant since trajectory-based methods may lead to unnecessary trajectory terminations, high variance, or biased action probability estimates~\citep{trust_region_estimators, jmlr_2}. Furthermore, unlike~\citet{learning_from_delayed_rewards}, we learn a policy from instantaneous, rather than delayed, rewards, which simplifies the learning process and enhances interpretability, efficiency, and real-time adaptation while reducing the dependence on precise reward discounting. Finally, we focus on a learning scheme instead of an evaluation framework that assesses the performance of the agent's policy, in contrast to~\cite{tmlr_3}.

\subsection{Off-Policy Correction in Discrete Control Tasks}
The integration of deep function approximators into RL has led to advancements in algorithms, such as RETRACE~\citep{munos_safe}, which employs full returns as in on-policy learning. However, this approach was found to introduce a zero-mean random variable at each step, resulting in increased variance~\citep{trust_region_estimators}. To tackle this, the V-trace algorithm~\citep{impala} was proposed for trading off the increased variance for biased return estimates. Later, LASER~\citep{trust_region_estimators} was developed, successfully eliminating V-trace's bias without introducing unbounded variance.

These advancements improved performance in traditional Atari games, yet many of them are not originally proposed nor evaluated in continuous action settings. Conversely, our method is specifically designed with continuous control considerations in mind, demonstrating its versatility. Moreover, our approach employs a one-step off-policy correction on individual transitions, unlike RETRACE and V-trace, which rely on full returns or trajectories. This approach enhances computational efficiency, avoids unnecessary trajectory terminations, and minimizes variance and biased action probability estimates that can occur when multi-step formulation is used~\citep{trust_region_estimators, jmlr_2}.

\subsection{Importance Sampling in Continuous Action Spaces}
Importance sampling has been relatively underemployed in continuous action domains~\citep{workshop}. Techniques, such as DISC~\citep{dimension_wise_is}, address the bias issue of clipped IS weights by individually clipping each action dimension. Batch Prioritized Experience Replay via KL Divergence (KLPER)~\citep{dogan} prioritizes on-policy samples in actor-critic algorithms by manipulating the structure in sampling from the replay buffer. Lately,~\citet{biris} considered the bias in off-policy evaluation caused by the reuse of off-policy samples. The authors call this phenomenon \emph{Reuse Bias}, and alleviate it using their algorithm, Bias-Regularized Importance Sampling (BIRIS). These techniques have produced impressive results in environments that were challenging due to the discrepancy caused by off-policy samples.


However, our study presents several distinct properties from the above IS methods. While the method by~\citet{dimension_wise_is} apply IS for on-policy learning to weight transitions or trajectories, our approach employs IS for off-policy learning to rectify off-policy bias. Also, the technique by~\citet{dogan} prioritizes on-policy samples as an experience replay sampling algorithm, yet our method is an off-policy correction scheme that cooperates with various experience replay sampling algorithms, as demonstrated by our empirical results. Finally, the BIRIS algorithm addresses the off-policy bias in policy evaluation instead of adjusting the contribution of off-policy samples in optimization. Thus, our study differs from BIRIS in terms of the motivation and approach to the problems caused by off-policy learning.


\subsection{Off-Policy Correction in Continuous Action Spaces}
\label{sec:related_work_competing}
In PG-based off-policy learning, many methods have addressed the instability of the conventional PG techniques~\citep{ris, ace,ipg,geoff_pac}. In the field of offline RL,~\citet{tmlr_2} proposed an exploration framework in the sparse reward setting to overcome the discrepancy between the offline data that demonstrates an excessively deviating task than the task of interest. Although our work shares similarity with the motivation of the study of~\citet{tmlr_2}, we focus on online learning instead of learning from a static dataset. Moreover, our approach centers around adjusting the impact of off-policy samples during the learning process, instead of exploration. Lastly, we consider instantaneous rewards rather than a delayed one, unlike~\citet{tmlr_2}.


Concerning online PG methods, which is the central scope of our work, the available methods use off-policy correction techniques for policy gradients. As an example, Relative Off-Policy Actor-Critic (RIS-Off-PAC)~\citep{ris} uses a smoothed variant of relative IS with a parameter $\beta \in [0, 1]$ to govern the smoothness. Actor-Critic with Emphatic Weights (ACE)~\citep{ace} applies an emphatic weighting scheme based on the study of~\citet{jmlr_7}, albeit with a misleading excursion objective. Generalized Off-Policy Actor-Critic (Geoff-PAC)~\citep{geoff_pac} rectifies this issue by employing a counterfactual objective and an emphatic approach for the policy gradient. Moreover, Interpolated Policy Gradient (IPG)~\citep{ipg} interpolates on- and off-policy updates for actor-critic methods while satisfying performance bounds. These methods employ multi-step bootstrapped Q-learning and have demonstrated promising results. Thus, we benchmark our proposed approach against these methods in our empirical analysis.

These mentioned studies address the shortcomings of traditional PG methods when used in conjunction with off-policy learning, where it is assumed that the transitions used to compute PG are collected by the current policy, i.e., on-policy. They employ various techniques, e.g., emphatic weighting~\citep{jmlr_7} or importance weights~\citep{ris}, to adapt the PG to off-policy learning. However, our approach deviates by focusing on mitigating the limitations of off-policy data, rather than modifying the policy gradient. The proposed technique employs traditional PG algorithms in correcting the off-policy samples, which are inherently on-policy. This aspect of our algorithm consolidates a plug-and-play framework for various PG methods \emph{complementing} the existing off-policy PG approaches.

\section{Background and Notation}
In this section, we delve into the technical underpinnings necessary to articulate the proposed algorithm. We commence with the preliminaries of deep RL, followed by our notation for the off-policy setting. Finally, we formally elucidate why many IS methods are not extendable to deterministic neural network policies. 

\subsection{Deep Reinforcement Learning}
At each discrete time step $t$, the agent observes a state $s$, chooses an action $a$, receives a reward $r$, and transitions to a next state $s^{\prime}$. In fully observable environments, the RL problem is represented by a Markov decision process (MDP), a tuple $(\mathcal{S}, \mathcal{A}, P, \gamma)$ where $\mathcal{S}$ and $\mathcal{A}$ denotes the state and action spaces, respectively, $P$ is the transition dynamics such that $s^{\prime}, r \sim P(s, a)$, and $\gamma$ is the constant discount factor. The \emph{value} is defined as the discounted sum of rewards $R_{t} = \sum_{i = 0}^{\infty}\gamma^{i}r_{t + i}$, where $\gamma$ prioritizes the short-term rewards.

The goal of an agent is to find a policy $\pi$ that maximizes the expected return: $\mathbb{E}_{s \sim P_{\pi}, a \sim \pi, r \sim P}[R_{0}]$, where $P_{\pi}$ is the distribution over the states visited by policy $\pi$. Policy of an agent is stochastic if it maps states to action probabilities $\pi: \mathcal{S} \rightarrow p(\mathcal{A})$ or deterministic if it maps states to a single action $\pi: \mathcal{S} \rightarrow \mathcal{A}$. Note that we use the term ``deterministic policy'' to refer to the policies approximated by the deterministic neural networks throughout the article. 

In this study, we consider one-step bootstrapped Q-learning~\citep{dayan_watkins_q_learning}. There exists an action-value function (Q-function or critic) corresponding to each policy $\pi$ that represents the expected return while following the policy after taking action $a$ in state $s$, which is computed through the Bellman operator $\mathcal{T}^{\pi}$~\citep{bellman}:
\begin{equation*}
    \mathcal{T}^{\pi}Q^{\pi}(s, a) = \mathbb{E}_{r, s^{\prime} \sim P, a^{\prime} \sim \pi}[r + \gamma Q^{\pi}(s^{\prime}, a^{\prime})].
\end{equation*}
The Bellman operator is a contraction for $\gamma \in [0, 1)$ with unique fixed point $Q^{\pi}(s, a)$, i.e., the resulting error is bounded by $\gamma$~\citep{bertsekas}. The optimal action-value function, $Q^{*}(s, a) = \max_{a}Q^{\pi}(s, a)$, is obtained through the greedy actions of the corresponding policy, that is, actions that yield the highest action-value.

In the deep setting of RL, the action-value functions are modeled by deep neural networks $Q_{\theta}(s, a)$, parameterized by $\theta$, also known as Q-networks. The Q-network is trained by minimizing a loss $J(\theta)$ on the TD error $\delta$, the difference between the network output $Q_{\theta}(s, a)$ and learning target $y$:
\begin{gather}
    y \coloneqq r + \gamma Q_{\theta^{\prime}}(s^{\prime}, a^{\prime}),\nonumber \\
    \delta \coloneqq y - Q_{\theta}(s, a),\nonumber\\
    J(\theta) =  \mathbb{E}_{s \sim P_{\pi}, a \sim \pi, r, s^{\prime} \sim P}[|\delta|^{2}]\label{eq:ddpg_critic_loss},
\end{gather}
where $\theta^{\prime}$ is the target network parameters of the Q-network. A target network is practical approach that provides a fixed objective to optimize the network and ensure stability in the updates. They are usually updated by a small proportion $\tau$ at each step, i.e., $\theta^{\prime} \leftarrow \tau\theta + (1 - \tau)\theta^{\prime}$, called soft-update, or periodically to exactly match the behavioral networks called hard-update.

In continuous action spaces, finding the optimal action that maximizes the Q-function, i.e., $\text{max}_{a}Q^{\pi}(s, a)$, is not feasible due to the infinite number of actions. To overcome this, actor-critic methods use a separate network, called the actor network (or policy network), that selects actions for the given states. The actor network $\pi_{\phi}$, parameterized by $\phi$, is optimized by a single-step gradient ascent on the policy gradient. The loss for the deterministic policies $J_{\text{det}}(\phi)$ is based on directly maximizing the Q-value estimate of the Q-network:
\begin{equation}
\label{eq:det_loss}
    J_{\text{det}}(\phi) = \mathbb{E}_{s \sim P_{\pi}}[Q_{\theta}(s, a)]|_{a = \pi_{\phi}(s)}.
\end{equation}
To realize a differentiable loss $J_{\text{sto}}(\phi)$, stochastic policies employ the log-derivative trick~\citep{williams_1992} and scale the logarithm by the expected sum of discounted rewards while following the current policy, predicted by the Q-network:
\begin{equation}
\label{eq:sto_loss}
    J_{\text{sto}}(\phi) = \mathbb{E}_{s \sim P_{\pi}}[\log \pi_{\phi}(a|s)Q_{\theta}(s, a)]|_{a \sim \pi_{\phi}(\cdot | s)}.
\end{equation}

\subsection{Off-Policy Learning}
The agents employ a \emph{behavioral} policy to accumulate experiences within a replay buffer. These experiences, however, do not fully represent the agent's current policy as they are collected through its older versions, thus termed \emph{off-policy samples}. Conversely, experiences gathered through the current policy are labeled \emph{on-policy samples}. During each update step, the agent selects a batch of transitions from the replay buffer employing a sampling algorithm, which may incorporate both on- and off-policy data:
\begin{equation}
\label{eq:transition_tuple_definition}
    (\mathbf{s}, \mathbf{a}, \mathbf{r}, \mathbf{s^{\prime}})_{i = 1}^{|\mathcal{B}|} \sim \mathcal{R}
\end{equation}
where $\mathcal{R}$ and $\mathcal{B}$ are used to denote experience replay and the sampled batch, respectively. We consistently apply boldface to refer to the batch (matrix) of each entity, while non-boldface terms represent vectors (for states and actions) or scalar values (for TD error and rewards).

\subsection{Importance Sampling in Continuous Control}
The current IS methods that incorporate action probabilities to perform off-policy correction demonstrate strong theoretical guarantees and exhibit good performance in practical applications~\citep{impala, trust_region_estimators, munos_safe, harutyunyan}. However, these methods are not well-suited for off-policy actor-critic algorithms that rely on deterministic neural networks to represent their policies. To illustrate, the RETRACE algorithm calculates the importance weights of transitions using the following approach:
\begin{equation}
    \text{RETRACE}(\Tilde{\lambda}) = \Tilde{\lambda} \text{min}(1, \frac{\pi(a | s)}{\mu(a|s)}),
\end{equation}
where $\pi$ and $\mu$ are the current and behavioral policies, respectively, and $\Tilde{\lambda}$ is a fixed parameter that specifies the maximum allowed distance between the behavioral and evaluation policy. When considering a deterministic network to represent the behavioral and evaluation policies, as observed in studies by~\citet{td3} and~\citet{ddpg}, it is important to note that the resulting importance weight will consistently yield a value of 1. This is because both policies functions are many-to-one mappings, where states are deterministically mapped to unique actions with a probability of 1. Consequently, the existing studies on IS are not feasible to be applied to deterministic actor-critic methods, since no weighting would occur in the end.

\section{Mitigating Off-Policy Bias: A Single-Step Approach}
We aim to improve learning efficiency by minimizing off-policy bias arising from divergent behavioral policies. To achieve this, we introduce Actor-Critic Off-Policy Correction (AC-Off-POC), with variants for deterministic and stochastic neural network policies. The AC-Off-POC framework applies Jensen-Shannon divergence to correct the impact of off-policy samples in Q-network and policy optimization, requiring no action probability estimates for deterministic policies. Off-policy bias can be remedied by weighting actor network optimization transitions with similarity weights. This approach respects the MDP assumption, which suggests that collected data originate from the actions generated by behavioral policies. Moreover, TD methods provide direct discrepancy correction between target and behavioral policies through cumulative reward objective-based value functions~\citep{harutyunyan}, necessitating weighting in Q-network update transitions as well. To this end, we represent policy similarity on a scale from 0 (uncorrelated) to 1 (identical). Next, we culminate in a comprehensive framework featuring AC-Off-POC for actor-critic with one-step Q-learning, validated by theoretical derivations. We show that AC-Off-POC delivers safe off-policy correction via a bounded $\gamma$-contraction mapping. Key insights are highlighted and discussed throughout.

\subsection{Deterministic Policies}
\label{sec:deterministic_AC-Off-POC}
We first assume that the multi-dimensional continuous actions at each step $t$ are samples from a multivariate Gaussian distribution. As the agent continually learns, each action chosen by a different behavioral policy may originate from a separate Gaussian distribution. Importantly, dimensions within an action vector are often correlated~\citep{mujoco, box2d}, reflecting real-world scenarios where the action space dimensions interact. Suppose a robot attempts to stand from a prone position. Here, each angle in the robot's $n$-dimensional joint-angle action space can influence others, underscoring cross-dimensional correlation. A multivariate Gaussian distribution effectively represents this action vector, with the mean vector denoting the deterministic action chosen by the policy, and the covariance matrix embodying the zero-mean Gaussian-distributed additive exploration noise~\citep{williams_1992}, which is a standard method for introducing exploration in leading deterministic PG algorithms~\citep{tmlr_1}.

Let $\lambda$ denote the similarity between the distributions of the behavioral policies that collected the sampled off-policy transitions and the current agent's policy (target or evaluation policy). Then, given a sampled batch of transitions as defined by Equation~\ref{eq:transition_tuple_definition}, we can express the deterministic policy loss, weighted by the similarity coefficient $\lambda$, as:
\begin{equation}
\label{eq:det_actor_loss}
    \Tilde{J}_{\text{det}}(\phi) = \frac{\lambda}{\lvert\mathcal{B}\rvert}\sum_{i=1}^{|\mathcal{B}|}Q_{\theta}(\mathbf{s}_{i}, \mathbf{a}_{i}).
\end{equation}
We refer to the $\lambda$ term as \emph{similarity weights} or \emph{off-policy coefficients} throughout. If the majority of behavioral policies within the sampled batch differ significantly from the current policy, the gradient step on the policy and Q-network parameters will be minimal, as both $\lambda$ and loss are computed over the batch. Extending this and using Equation~\ref{eq:det_actor_loss}, we define the $\lambda$-scaled TD learning loss as:
\begin{equation}
\label{eq:det_critic_loss}
        \Tilde{J}(\theta) = \lambda\frac{\Vert \boldsymbol{\delta}\Vert^{2}_{2}}{|\mathcal{B}|}.
\end{equation}
To gauge the similarity between the current policy and the policies that enacted the off-policy transitions from sampled batch $\mathcal{B}$, we initially forward pass the states from $\mathcal{B}$ through the actor network corresponding to the current policy:
\begin{equation*}
    \mathbf{\hat{\mathbf{a}}} = \pi_{\phi}(\mathbf{s}).
\end{equation*}
We now have the batch of current policy's action decisions $\mathbf{\hat{\mathbf{a}}}$ on the states from the off-policy transitions, and the batch of action decisions of past policies $\mathbf{a} \in \mathcal{B}$. Let $\mathbf{\dot{a}}$ be the batch of numerical differences in the action decisions:
\begin{equation*}
    \mathbf{\dot{a}} \coloneqq \mathbf{a} - \hat{\mathbf{a}}.
\end{equation*}
Recalling the multivariate Gaussian assumption on the actions, we then construct a multivariate Gaussian distribution from the action difference batch $\mathcal{N}(\dot{\mu}, \dot{\Sigma}$), where:
\begin{gather}
    \dot{\mu} = \frac{1}{|\mathcal{B}|}\smashoperator{\sum_{i = 1}^{|\mathcal{B}|}}\mathbf{\dot{a}}_{i},\label{eq:multi_var_gaussian_mean} \\
    \dot{\Sigma} = \frac{1}{|\mathcal{B}| - 1}\smashoperator{\sum_{i = 1}^{|\mathcal{B}|}}(\mathbf{\dot{a}}_{i} - \dot{\mu})^{\top}(\mathbf{\dot{a}}_{i} - \dot{\mu}),\nonumber
\end{gather}
Next, we define the dissimilarity measure as:
\begin{equation}
\label{eq:complete_dissimilarity_measurement}
    \rho = \text{JSD}(\mathcal{N}(\dot{\mu}, \dot{\Sigma}) \| \mathcal{N}(0, \sigma \mathbb{I})),
\end{equation}
where $\sigma$ is the standard deviation of the exploration noise and $\mathbb{I}$ is the identity matrix. In addition, JSD denotes the Jensen-Shannon divergence, the symmetrized and smoothed version of the Kullback–Leibler (KL) divergence, defined by:
\begin{gather*}
    M = \frac{1}{2}(P + Q), \\
    \text{JSD}(P \| Q) = \frac{1}{2}\text{KL}(P \| M) + \frac{1}{2}\text{KL}(Q \| M),
\end{gather*}
where $P$ and $Q$ are random variables. There are various divergence options to choose from when quantifying the importance of transitions in Equation~\ref{eq:complete_dissimilarity_measurement}. For instance, one can employ the Rényi divergence~\citep{renyi_1, renyi_2} due to its convenient expression for the moments of importance weights.


However, we choose to begin with KL divergence, derived from Shannon entropy~\citep[chap.~6]{shannon}, which quantifies the information contained in a probability distribution. From the RL perspective, the Shannon entropy of a policy distribution reflects the policy's uncertainty, as actions conditioned on observed states are sampled from this distribution. For instance, an optimal deterministic policy that always selects the same action has low environmental uncertainty (e.g., partial observability, adversarial behavior), reflected in low Shannon entropy. Therefore, the KL divergence between the distributions of two policies indicates their differential environmental uncertainty. A large KL divergence implies that the policies are dissimilar, choosing different actions for each state, suggesting different levels of environmental uncertainty.

Information-theoretical intuition provides a basis for employing KL divergence as an off-policy correction metric~\citep{dogan}. However, due to its harsh penalization of significantly differing distributions, a symmetric measure like Jensen-Shannon (JS) divergence becomes a better choice. JS divergence offers a smoothed version of KL divergence and does not penalize distribution differences as severely. Thus, considering the unlikelihood of two behavioral policies of an agent in the same environment being entirely distinct~\citep[chap.~11]{sutton_book}, JS divergence should be used to gauge the similarity between two behavioral policies during training.

Moreover, we note that rather than directly comparing the action difference with a zero multivariate Gaussian in Equation~\ref{eq:complete_dissimilarity_measurement}, we compute JSD using a reference matrix of zero-mean Gaussian with a diagonal covariance matrix that contains the standard deviation of the exploration noise in the diagonal entries. Otherwise, policies closer to the current policy that executed a transition with exploration noise could be rejected as the action decisions would numerically deviate from the current policy. Hence, we couple the standard deviation of the exploration noise with the reference matrix to form a parameter-free algorithm. 

Naturally, if all transitions in the sampled batch match the distribution under the current policy, we have $\rho = 0$ and $\rho \in (0, \infty)$ otherwise in Equation~\ref{eq:complete_dissimilarity_measurement}. To project the similarity measure into the interval $[0, 1]$ for bounded weights, a non-linear transformation can be applied:
\begin{equation}
\label{eq:complete_similarity_measurement}
    \lambda = e^{-\rho}.
\end{equation}
Notice that for two identical policies we have $\lambda = 1$, and for completely distinct policies we have $\lambda = 0$, making Equation~\ref{eq:complete_similarity_measurement} a similarity measure between two policies. We choose the exponential operator because it provides a continuous link between unbounded similarity values to bounded similarity weights without requiring any quantization, and runs in constant-time, i.e., $\mathcal{O}(1)$. Furthermore, its practical runtime can be reduced using SIMD (Single Instruction Multiple Data) instructions that modern CPUs and compilers offer. SIMD instructions facilitate vector operations, such as computing $e^{x}$ for each element $x$ of a given vector of dimension $n$. A naive implementation would require $n$ exponentiation operations, leading to a runtime of $\mathcal{O}(n)$. However, SIMD instructions can optimize this operation by allowing the same operation to be executed on multiple data elements in parallel~\citep{simd_operations_3, simd_operations_2, simd_operations}. Thus, the exponential operator also provides the advantage of quick and efficient projection.

The deterministic variant of AC-Off-POC computes the numerical discrepancy between actions of the target and behavioral policies, essentially comparing the distributions under the current and off-policy-transition-executing policies. However, a possible concern lies in the minority of batch transitions executed by policies highly similar to the current one. The average-based similarity weight computation, as per Equation~\ref{eq:multi_var_gaussian_mean}, may assign a near-zero fixed weight to these transitions, causing information loss. We address this issue when we introduce the stochastic variant. Additionally, we provide an intuitive analysis on the impact of minibatch size in Observation~\ref{rem:det_batch_size_analysis}.
\begin{observation}
\label{rem:det_batch_size_analysis}
    Let $|\mathcal{B}|$ and $|\mathcal{R}|$ are the fixed minibatch and experience replay buffer sizes used in training. If $|\mathcal{B}| \rightarrow |\mathcal{R}|$, then the mean indices of the sampled transitions will be 0.5 in the expectation, and deterministic AC-Off-POC weights become slightly less than 0.5 due to the non-linear transformation. If $|\mathcal{B}| \rightarrow 0$, then the AC-Off-POC weights cannot be accurately estimated. Therefore, very large or small minibatch sizes may result in poor performance.  
\end{observation}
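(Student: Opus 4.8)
The plan is to treat the replay buffer $\mathcal{R}$ as an ordered collection of transitions indexed by recency, $1,\dots,|\mathcal{R}|$, rescaled to $[0,1]$, so that an index near $1$ marks a transition generated by a recent (hence near on-policy) behavioral policy and an index near $0$ marks an old (hence more off-policy) one. I would first dispatch the statement about the mean index purely probabilistically. Since a mini-batch is drawn uniformly from $\mathcal{R}$, linearity of expectation gives that every sampled normalized index has expectation equal to the population mean $\tfrac{|\mathcal{R}|+1}{2|\mathcal{R}|}$, so the expected empirical mean index is $\tfrac{1}{2}$ for \emph{any} batch size. The role of the limit $|\mathcal{B}|\to|\mathcal{R}|$ is then concentration: sampling without replacement makes the variance of the empirical mean scale like $\tfrac{|\mathcal{R}|-|\mathcal{B}|}{|\mathcal{R}|-1}$, which vanishes as $|\mathcal{B}|\to|\mathcal{R}|$, so the mean index is not merely $0.5$ in expectation but concentrates deterministically at $0.5$.

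Next I would connect this mean index to the importance weight of Equation~(\ref{eq:complete_similarity_measurement}). The idea is to posit, consistent with Remark~\ref{rem:det_policies_conc}, a monotone map from a transition's recency to the magnitude of its action difference $\boldsymbol{\dot{A}}_i$: recent transitions contribute differences concentrated near the exploration-noise reference $\mathcal{N}(\boldsymbol{0},\sigma\boldsymbol{I})$, while old ones contribute larger, more dispersed differences. When $|\mathcal{B}|\to|\mathcal{R}|$ the batch spans the whole history, so the empirical mean $\boldsymbol{\dot{\mu}}$ and covariance $\boldsymbol{\dot{\Sigma}}$ of Equations~(\ref{eq:multi_var_gaussian_mean})--(\ref{eq:multi_var_gauss_cov_mat}) describe an ``average'' behavioral policy sitting at the midpoint of the recency axis, and mapping this midpoint through the monotone recency-to-similarity correspondence yields a value near $0.5$. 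The refinement that the weight is \emph{slightly less than} $0.5$ I would obtain from the convexity of $\rho\mapsto e^{-\rho}$ in Equation~(\ref{eq:complete_similarity_measurement}) together with Jensen's inequality: because the full-buffer batch mixes on- and off-policy transitions, the pooled covariance $\boldsymbol{\dot{\Sigma}}$ is inflated relative to any single-policy covariance, which raises the Jensen--Shannon dissimilarity $\rho$ above its pointwise-midpoint value and therefore pulls $\lambda=e^{-\rho}$ below the naive $0.5$.

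For the opposite regime $|\mathcal{B}|\to 0$ I would argue from estimator quality. The covariance estimate in Equation~(\ref{eq:multi_var_gauss_cov_mat}) carries the Bessel factor $\tfrac{1}{|\mathcal{B}|-1}$, so it is undefined at $|\mathcal{B}|=1$ and, more generally, the sampling error of both $\boldsymbol{\dot{\mu}}$ and $\boldsymbol{\dot{\Sigma}}$ decays only like $|\mathcal{B}|^{-1/2}$; for very small batches the Gaussian parameters fed into Equation~(\ref{eq:complete_dissimilarity_measurement}) are statistically unreliable, and so is the resulting $\lambda$. Combining the two regimes gives the stated conclusion: large batches drive $\lambda$ toward an uninformative constant near $0.5$ that no longer discriminates on- from off-policy data, while small batches make $\lambda$ noisy, so both extremes degrade learning. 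The main obstacle I anticipate is \emph{rigorously} justifying the recency-to-similarity correspondence used in the middle step, since it is only statistically true that later transitions come from policies closer to the current one; the cleanest route is therefore to state this monotonicity as a modeling premise (in line with the multivariate Gaussian assumption and Remark~\ref{rem:det_policies_conc}) and then carry the expectation, concentration, and Jensen arguments exactly, keeping the claim at the level of expected and limiting behavior rather than a tight two-sided bound.
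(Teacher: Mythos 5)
The first thing to note is that the paper contains no proof of this Observation: nothing follows the statement, and its only support is empirical --- the batch-size sensitivity study (Table~\ref{table:ablation_bs}, Section~\ref{sec:sensitivity_analysis}) and the weight analysis of Section~\ref{sec:weight_analysis}, where the approximately linear relation between normalized time step and weight in Figure~\ref{fig:deterministic_weight_analysis} is what licenses reading ``mean index $0.5$'' as ``weight near $0.5$.'' So your attempt is necessarily a new formalization, not a reconstruction. Two of your three steps are sound and in fact sharpen the paper's phrasing: uniform sampling gives an expected normalized mean index of $\tfrac{|\mathcal{R}|+1}{2|\mathcal{R}|} \approx \tfrac{1}{2}$ for \emph{any} batch size, with the finite-population correction supplying concentration as $|\mathcal{B}| \rightarrow |\mathcal{R}|$; and the small-batch failure does follow from the Bessel factor $\tfrac{1}{|\mathcal{B}|-1}$ in Equation~(\ref{eq:multi_var_gauss_cov_mat}) (undefined at $|\mathcal{B}|=1$) together with the $\mathcal{O}(|\mathcal{B}|^{-1/2})$ error of the moment estimates fed into Equation~(\ref{eq:complete_dissimilarity_measurement}).

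The genuine gap is your middle step, in two places. First, monotonicity of the recency-to-similarity map is far too weak for the conclusion you draw from it: a monotone map of $[0,1]$ into $[0,1]$ can send the midpoint anywhere in $(0,1)$, so ``the batch sits at midpoint recency'' does not yield ``the weight is near $0.5$.'' What you actually need is that the map is approximately the identity, which is precisely the paper's implicit assumption and is verified only empirically (the ``linear path'' of Figure~\ref{fig:deterministic_weight_analysis}); once you assume that, the middle step is a premise rather than a derivation, which is presumably why the authors downgraded the claim to an Observation. Second, your Jensen-based refinement that the weight lands \emph{strictly below} $0.5$ cannot go through under the paper's own definition of the divergence: since $M = \tfrac{1}{2}(P+Q)$ implies $P \leq 2M$, one has $\mathrm{KL}(P\|M) \leq \ln 2$ and hence $\rho = \mathrm{JSD} \leq \ln 2$, so $\lambda = e^{-\rho} \geq \tfrac{1}{2}$ identically --- no covariance inflation argument can push $\lambda$ below $0.5$. (This also contradicts the paper's own claim that $\lambda = 0$ for distinct policies; both statements are only coherent if ``JSD'' is implemented as the unbounded symmetrized KL between the two Gaussians, since the exact JSD of Gaussians has no closed form.) So the ``slightly less than $0.5$'' part of the Observation cannot be proved without first pinning down that implementation detail, and with the paper's literal definition it is false.
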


\subsection{Stochastic Policies}
\label{sec:stochastic_AC-Off-POC}
The application of AC-Off-POC to stochastic policies is straightforward, given that the policies are represented by probability distributions. While these known probability distributions can be directly used in traditional IS methods via eligibility traces, they often necessitate long trajectories executed by each policy, as demonstrated in prior work. Instead, stochastic AC-Off-POC offers an off-policy correction on randomly selected off-policy transitions.

In this variant, we can forgo assumptions on the specific distribution from which actions are sampled. Thus, we can store the parameters of the policy distributions in the replay buffer. For instance, if an agent's stochastic policy is represented by a Beta distribution $\text{B}(\alpha, \beta)$, we can store the parameters $\alpha$ and $\beta$ in the experience replay buffer, sampling these alongside states, actions, and rewards during each update step. Hence, the agent samples a batch of off-policy transitions along with the parameters that define the policy distribution:
\begin{equation*}
    (\mathbf{s}, \mathbf{a}, \mathbf{r}, \mathbf{s^{\prime}}, \alpha_{1}, \dots, \alpha_{|\mathcal{B}|}) \sim \mathcal{R}.
\end{equation*}
In this way, each behavioral policy $\eta_{i}(\cdot)$ that executed the $i^\text{th}$ transition can be represented with the corresponding distribution parameters: $\eta_{\alpha_{i}}(\cdot)$. Instead of comparing the difference batch with a reference Gaussian, we directly measure the similarity between the distribution under the current policy $\pi(\cdot)$ and $\eta_{\alpha_{i}}(\cdot)$:
\begin{gather*}
    \rho_{i} = \text{JSD}(\pi(\cdot)\|\eta_{\alpha_{i}}(\cdot)),\\
    \lambda_{i} = \min[\frac{\pi(\mathbf{a}_{i}|\mathbf{s}_{i})}{\eta_{\alpha_{i}}(\mathbf{a}_{i}|\mathbf{s}_{i})}, e^{-\rho_{i}}].
\end{gather*}
In the latter equation, we clip the similarity weights to obtain a $\gamma$-contraction mapping around the optimal Q-value, which we examine in the proof of Theorem~\ref{thm:safe_off_policy}. We then construct a similarity weight vector for the sampled batch of transitions:
\begin{equation*}
    \boldsymbol{\lambda} \coloneqq \begin{bmatrix}\lambda_{1} & \lambda_{2} & \dots & \lambda_{|\mathcal{B}|}\end{bmatrix}^{\top}.
\end{equation*}
Similar to Equations~\ref{eq:det_actor_loss} and~\ref{eq:det_critic_loss}, we can derive the weighted policy and critic loss for stochastic actor-critic:
\begin{gather}
    \Tilde{J}_{\text{sto}}(\phi) = \frac{1}{\|\boldsymbol{\lambda}\|_{1}}\sum_{i=1}^{\mathcal{B}}\boldsymbol{\lambda}_{i} \log \pi_{\phi}(\mathbf{a}_{i}\vert\mathbf{s}_{i}) Q_{\theta}(\mathbf{s}_{i}, \mathbf{a}_{i})|_{\mathbf{a}_{i} \sim \pi_{\phi}(\cdot | \mathbf{s}_{i})},\label{eq:sto_actor_loss} \\
    \Tilde{J}(\theta) = \frac{\Vert \boldsymbol{\lambda} \circ \boldsymbol{\delta} \Vert^{2}_{2}}{\Vert \boldsymbol{\lambda}\Vert_{1}},\label{eq:sto_critic_loss}
\end{gather}
where $\Vert \cdot \Vert_{1}$ represents the $\text{L}_{1}$ norm and $\circ$ is the Hadamard product.

In the stochastic variant each off-policy transition carries a unique weight, as it can be computed through the corresponding policy distribution. Hence, we compute the loss for actor and critic parameters using a weighted sum of the off-policy transitions, which contrasts the deterministic case. This approach circumvents the potential issue of information loss in the deterministic variant. Nevertheless, even in the deterministic case, such loss of information can be largely disregarded, as the sampled batch is generally expected to be similar to the current policy on average. With this understanding, we extend our discussions on deterministic policies to the stochastic variant of AC-Off-POC. Finally, we provide a pseudocode for AC-Off-POC in Algorithm~\ref{alg:js_importance_weighting}.


\begin{algorithm}[!t]
    \caption{Actor-Critic Off-Policy Correction (AC-Off-POC)}
    \begin{algorithmic}[1]
        \STATE \textbf{Input:} $\pi_{\phi}, \mathcal{B}$
        \STATE \textbf{Output:} $\lambda$ $\vee$ $\boldsymbol{\lambda}$
        \IF{$\pi$ is deterministic}
            \STATE Obtain the minibatch of transitions: 
            $(\mathbf{s}, \mathbf{a}, \mathbf{r}, \mathbf{s^{\prime}})_{i=1}^{|\mathcal{B}|} \sim \mathcal{R}$
            \STATE Compute the current policy's action decisions on the sampled states: 
            $\mathbf{\hat{a}} = \pi_{\phi}(\mathbf{s})$
            \STATE Compute the numerical action difference batch: $\mathbf{\dot{a}} = \mathbf{a} - \mathbf{\hat{a}}$
            \STATE Construct the multivariate Gaussian distribution $\mathcal{N}(\dot{\mu},  \dot{\Sigma})$: \\
            $\dot{\mu} = \frac{1}{|\mathcal{B}|}\smashoperator{\sum_{i = 1}^{|\mathcal{B}|}}\mathbf{\dot{a}}_{i}, \qquad
            \dot{\Sigma} = \frac{1}{|\mathcal{B}| - 1}\smashoperator{\sum_{i = 1}^{|\mathcal{B}|}}(\mathbf{\dot{a}}_{i} - \dot{\mu})^{\top}(\mathbf{\dot{a}}_{i} - \dot{\mu})$
            \STATE Compute the similarity coefficient $\lambda$: $\rho = \text{JSD}(\mathcal{N}(\dot{\mu}, \dot{\Sigma}) \| \mathcal{N}(0, \sigma\mathbb{I}) \Rightarrow \lambda = e^{-\rho}$
        \ELSE
            \STATE Obtain the minibatch of transitions containing the policy distribution parameters: \\
            $(\mathbf{s}, \mathbf{a}, \mathbf{r}, \mathbf{s^{\prime}}, \alpha_{1}, \dots, \alpha_{|\mathcal{B}|}) \sim \mathcal{B}.$
            \STATE Compute the similarity coefficients $\boldsymbol{\lambda} = \begin{bmatrix}\lambda_{1} & \lambda_{2} & \dots & \lambda_{|\mathcal{B}|}\end{bmatrix}^{\top}$: \\
            $\rho_{i} = \text{JSD}(\pi(\cdot)\|\eta_{\alpha_{i}}(\cdot)) \Rightarrow \lambda_{i} = \min[\frac{\pi(\mathbf{a}_{i}|\mathbf{s}_{i})}{\eta_{\alpha_{i}}(\mathbf{a}_{i}|\mathbf{s}_{i})}, e^{-\rho_{i}}]$
        \ENDIF
        \RETURN $\lambda$ $\vee$ $\boldsymbol{\lambda}$
    \end{algorithmic}
    \label{alg:js_importance_weighting}
\end{algorithm}

\subsection{AC-Off-POC for Actor-Critic with One-Step Bootstrapped Q-learning}
\label{sec:lemma_sec}
Algorithm~\ref{alg:actor_critic_with_AC-Off-POC} elucidates a general framework incorporating AC-Off-POC. Subsequently, we address the potential limitations of AC-Off-POC and its advantages over previous work. We then conduct an intuitive complexity analysis of our method. Lastly, our theoretical analysis, specifically Theorem~\ref{thm:safe_off_policy}, establishes that our off-policy correction method can generate a $\gamma$-contraction mapping around the optimal Q-value with the TD(0) algorithm~\citep{sutton_1988}. This analysis leverages the tabular setting, routinely used in the deep RL literature to offer theoretical assurances for non-tabular and deep function approximation contexts~\citep{munos_safe,trust_region_estimators,impala,td3,off_policy}.

\subsubsection{Addressing Potential Concerns}
One concern with the introduced approach is that the mean difference could be zero even when the behavioral policies are quite different from the current policy. Suppose that the action range is $[-10, 10]$, and the current policy outputs action 0, where half of the sampled past actions are -10, and the other half are 10. The fitted normal distribution will have a zero mean even though the policies differ considerably. In this particular example, the variance in the fitted normal distribution would mitigate this issue, that is, the empirical variance would be greater than the exploration noise added to the actions, and thus, the similarity weights will be small.

Furthermore, relatively different policies might select the same action for a particular state. Yet, their probability distributions may not similar to each other. Then, we would give low importance weights to the corresponding tuple. However, optimization of stochastic policies gradients is not similar to the deterministic policies since the distribution-related parameters are included in the training. For example, in the Soft Actor-Critic (SAC) algorithm~\citep{sac}, (log-) probabilities of the actions rather than only the selected actions and the entropy of the policy distributions are also included in calculating the policy gradient. Hence, the distributions themselves should also contribute to computing the similarities. Thus, even if two policies choose the same actions, we give low similarity weights to the corresponding tuple unless their distributions are also similar.

\subsubsection{Advantages Over the Prior Works}
AC-Off-POC addresses the issues associated with trajectory-based IS, including high variance build-up and vanishing gradients. It avoids these issues by employing individual transitions that are not temporally correlated, with a maximum weight of one. Thus, the weights of the sampled transitions in a single update do not influence the weights in subsequent updates. This makes AC-Off-POC a more favorable choice for off-policy corrections compared to standard IS methods.

Recall that off-policy weighting in TD learning methods is crucial to adjust the expected return under the assumption that off-policy data was generated by the current policy. The existing off-policy correction studies in continuous control focus on off-policy PG. However, these approaches incur high computational complexity, especially with increasing state and action dimensions, as they necessitate lengthy trajectory collection and task-specific hyperparameter tuning. Their adaptability to deterministic policies is also constrained, as they rely on action probabilities, infeasible with deterministic policy approximations via neural networks. An alternative off-policy correction method should also consider off-policy bias in relation to off-policy data, rather than policy gradients, and be compatible with deterministic policies. Our presented solution effectively and safely addresses off-policy correction, both theoretically and practically, as demonstrated through the following rigorous analyses and empirical studies.

\subsubsection{Complexity Analysis}
Primarily, AC-Off-POC necessitates various arithmetic matrix operations and one forward pass through the actor network. The forward pass includes multiple matrix multiplications, linear in input size, overshadowing the complexity of other matrix operations within AC-Off-POC. Thus, the complexity here is $\mathcal{O}(m)$. Conventionally, a simple deep actor-critic algorithm employs two networks for the actor and critic. Backpropagation, similar to the forward pass, is linear in input size, yielding the same complexity if trained over a single iteration. Consequently, each learning step per iteration demands $\mathcal{O}(m) + \mathcal{O}(m + n)$ for the actor and critic networks respectively. Given that the complexity brought by AC-Off-POC is $\mathcal{O}(m)$, it is inferred that AC-Off-POC escalates the complexity of actor-critic algorithms by slightly less than 50\% (at most).

\subsubsection{Theoretical Analysis}

\begin{algorithm}[!t]
    \caption{A General Framework of AC-Off-POC for Actor-Critic with One-Step Bootstrapped Q-learning Off-Policy}
    \begin{algorithmic}[1]
        \STATE Initialize the agent with actor $\pi_{\phi}$ and critic $Q_{\theta}$ networks with parameters $\phi, \theta$
        \STATE Initialize target and additional networks if required
        \STATE Initialize the experience replay buffer $\mathcal{R}$
        \FOR{each exploration time step}
        \STATE Explore the environment and collect a transition tuple $(s, a, r, s^{\prime})$
        \IF{the policy is stochastic}
            \STATE Include the policy distribution parameters $\alpha$ to the collected tuple
        \ENDIF
        \STATE Store transition tuple into $\mathcal{R}$
        \ENDFOR
        \FOR{each training iteration}
            \STATE Sample a batch of transitions $\mathcal{B}$ from $\mathcal{R}$ by a sampling algorithm
            \STATE Obtain the similarity weight(s): $\lambda$ $\vee$ $\boldsymbol{\lambda}$ = \textbf{AC-Off-POC($\pi_{\phi}$, $\mathcal{B}$)}
            \STATE Update actor and critic networks with an actor-critic algorithm with one-step bootstrapped Q-learning using the losses weighted by $\lambda$ $\vee$ $\boldsymbol{\lambda}$, e.g., Equations~\ref{eq:det_actor_loss},~\ref{eq:det_critic_loss},~\ref{eq:sto_actor_loss}, and~\ref{eq:sto_critic_loss}
            \STATE Update target networks if required
        \ENDFOR
    \end{algorithmic}
    \label{alg:actor_critic_with_AC-Off-POC}
\end{algorithm}

\begin{definition}
\label{def:exp_op_definition}
    AC-Off-POC can be regarded as a special case of the RETRACE algorithm~\citep{munos_safe}. Therefore, the general operator for single-step off-policy correction by AC-Off-POC, which is the reduction of RETRACE to a one-step backup setting, is defined by:
    \begin{equation*}
            \mathcal{H}Q(s, a) \coloneqq Q(s, a) + \mathbb{E}_{\eta}[r + \gamma\mathbb{E}_{\pi}Q(s^{\prime}, \cdot) - Q(s, a)],
        \end{equation*}
    for any behavioral policy $\eta$ that collected the off-policy samples, where we define $\mathbb{E}_{\eta}[\cdot] \coloneqq \mathbb{E}_{s \sim P_{\eta}, a \sim \eta, r, s^{\prime} \sim P}[\cdot]$ and $\mathbb{E}_{\pi}Q(s, \cdot) \coloneqq \sum_{a}\pi(a | s)Q(s, a)$. 
\end{definition}

\begin{assumption}[coverage assumption]
\label{def:coverage_assumption}
    There is a non-zero probability that the current agent’s or evaluation policy executes an action with a non-zero probability of being executed under a behavioral policy:
    \begin{equation}
        \pi(a|s) > 0 \implies \mu(a|s) > 0, \qquad \forall{s} \in \mathcal{S}, \quad \forall{a} \in \mathcal{A}.
    \end{equation}
\end{assumption}

\begin{observation}[Implication of Assumption \ref{def:coverage_assumption}]
    Assumption 5 guarantees that the similarity weights fall within the range $[0, \frac{\pi(y|s^{\prime})}{\eta(y|s^{\prime})}]$. This property emerges from our unique computation method, independent of direct action probabilities under the behavioral and target policies. This independence, coupled with the inherent upper bound on the similarity weights, ensures a controlled variance of the IS estimator even when the behavioral policy's action probability is small, thereby contributing to the robustness and efficiency of AC-Off-POC.
\end{observation}

\begin{lemma}
\label{lem:difference}
    The difference between $\mathcal{H}Q$ and its fixed point $Q^{\pi}$ is:
        \begin{equation*}
            \mathcal{H}Q(s, a) - Q^{\pi}(s, a) = \gamma\mathbb{E}_{s^{\prime} \sim P_{\eta}, a^{\prime} \sim \eta}[\mathbb{E}_{\pi}(Q - Q^{\pi})(s^{\prime}, \cdot) - \lambda^{\prime}(Q - Q^{\pi})(s^{\prime}, a^{\prime})],
        \end{equation*}
    where $\lambda^{\prime}$ is the similarity coefficient computed for the subsequent transition of $(s, a, r, s^{\prime})$.
\end{lemma}
\begin{proof}
    See Appendix \hyperref[app:lemma]{A}.
\end{proof}

\begin{theorem}
\label{thm:safe_off_policy}
    If for each action selected by the current policy $a_{i, \pi} \sim \pi$ and sampled batch of transitions $\mathcal{B}_{j} \sim \mathcal{R}$, we have the similarity coefficient $\lambda_{i} = \Lambda(a_{i, \pi}, \mathcal{B}_{j}) \in \min[\frac{\pi(a_{i, \pi}|s_{i})}{\eta(a_{i, \pi}|s_{i})}, e^{-\rho_{i}}]$ for the transition tuple $(s, a, r, s^{\prime})_{i} \in \mathcal{B}_{j}$ collected by the behavioral policy $\eta_{i}$. Then, considering that Assumption~\ref{def:coverage_assumption} is satisfied, for any Q-function $Q$, we have $\mathcal{H}$ being a $\gamma$-contraction mapping around $Q^{\pi}$:
    \begin{equation*}
        \| \mathcal{H}Q - Q^{\pi} \|_{\infty} \leq \gamma \|Q - Q^{\pi}\|_{\infty},
    \end{equation*}
    where $\|\cdot\|_{\infty}$ is the supremum norm.
\end{theorem}
\begin{proof}
    The proof considers stochastic policies and reduction to deterministic policies is straightforward, as we show. From Lemma~\ref{lem:difference}, the difference between $\mathcal{H}Q$ and its fixed point $Q^{\pi}$ is expressed by:
    \begin{align*}
        \mathcal{H}Q(s, a) - Q^{\pi}(s, a) &= \gamma \mathbb{E}_{s^{\prime} \sim P_{\eta}, a^{\prime} \sim \eta}\left[\mathbb{E}_{\pi}\Delta Q(s^{\prime}, \cdot) - \lambda^{\prime}\Delta Q(s^{\prime}, a^{\prime})\right], \\
        &= \gamma \mathbb{E}_{s^{\prime} \sim P_{\eta}}[\mathbb{E}_{\pi}\Delta Q(s^{\prime}, \cdot) - \mathbb{E}_{a_{\pi}^{\prime}, s^{\prime} \sim P_{\eta}, a^{\prime} \sim \eta, \mathcal{B}^{\prime} \sim \mathcal{R}}[\Lambda(a_{\pi}^{\prime}, \mathcal{B}^{\prime})\Delta Q(s^{\prime}, a^{\prime})|\mathcal{B}^{\prime}]] \\
        &= \gamma \mathbb{E}_{s^{\prime} \sim P_{\eta}, \mathcal{B}^{\prime} \sim \mathcal{R}}[\sum_{y}(\pi(y|s^{\prime}) - \eta(y|s^{\prime})\Lambda(y, \mathcal{B}^{\prime}))\Delta Q(s^{\prime}, y)],
    \end{align*}
    where $\mathcal{B}^{\prime}$ is the batch containing the subsequent transitions of $(s, a, r, s^{\prime})_{i = 1}^{\vert \mathcal{B} \vert}$, i.e., $(s^{\prime}, a^{\prime}, r^{\prime}, s^{\prime\prime})_{i = 1}^{\vert \mathcal{B}^{\prime} \vert} \in \mathcal{B}^{\prime}$, and $\lambda^{\prime} = \Lambda(a_{\pi}^{\prime}, \mathcal{B}^{\prime})$ is the corresponding similarity coefficient. Additionally, to obtain non-negative probabilities in the latter equation, $\Lambda(y, \mathcal{B}^{\prime}) \leq \frac{\pi(y|s^{\prime})}{\eta(y|s^{\prime})}$ must be satisfied. This is why we clip $\lambda_{t} = \Lambda(a_{t}, \mathcal{B}_{t})$ for the corresponding pair $(s_{t}, a_{t})$ in the stochastic variant. For deterministic policies, we can have either $\eta(y|s^{\prime}) = 1$ or $\eta(y|s^{\prime}) = 0$. This is because the target and behavioral policies are many-to-one functions, mapping each state with a single action that has a probability of 1. Moreover, since $y$ represents the action chosen by the target deterministic policy, we always have $\pi(y|s^{\prime}) = 1$. Therefore, the constraint for the first case is trivially satisfied for any $\lambda \in [0, 1]$. The constraint for the second case also holds, as can be shown by substituting $\eta(y|s^{\prime})$ with 0 and following the rest of the proof.
    
    Having $\pi(y|s^{\prime}) - \eta(y|s^{\prime})\Lambda(y, \mathcal{B}^{\prime}) \geq 0$ satisfied, we have:
    \begin{equation*}
        \mathcal{H}Q(s, a) - Q^{\pi}(s, a) = \sum_{x, y}w_{x, y}\Delta Q(x, y),
    \end{equation*}
    which is a linear combination of $\Delta Q(x, y)$ weighted by non-negative coefficients $w_{x, y}$:
    \begin{equation*}
        w_{x, y} \coloneqq \gamma \mathbb{E}_{s^{\prime} \sim P_{\eta}, \mathcal{B}^{\prime} \sim \mathcal{R}}[(\pi(y|s^{\prime}) - \eta(y|s^{\prime})\Lambda(y, \mathcal{B}^{\prime}))\mathbb{I}\{s^{\prime} = x\}], 
    \end{equation*}
    where $\mathbb{I}(\cdot)$ is the indicator function. The sum of those coefficients over $x$ and $y$ is:
    \begin{align*}
    \sum_{x, y}w_{x, y} &= \gamma\mathbb{E}_{s^{\prime} \sim P_{\eta}, \mathcal{B}^{\prime} \sim \mathcal{R}}[\sum_{y}\pi(y|s^{\prime}) - \eta(y|s^{\prime})\Lambda(y, \mathcal{B}^{\prime})], \\
    &=\gamma\mathbb{E}_{\text{$a_{\mu}^{\prime}$}, \mathcal{B}^{\prime} \sim \mathcal{R}}[1 - \Lambda(a_{\pi}^{\prime}, \mathcal{B}^{\prime}) | \mathcal{B}^{\prime}], \\ 
    &=\mathbb{E}_{\text{$a_{\mu}^{\prime}$}, \mathcal{B}^{\prime} \sim \mathcal{R}}[\gamma - \gamma\Lambda(a_{\pi}^{\prime}, \mathcal{B}^{\prime}) | \mathcal{B}^{\prime}].
    \end{align*}
     Clearly, we have $\sum_{x, y}w_{x, y} \leq \gamma$ since $\Lambda(a_{\pi}^{\prime}, \mathcal{B}^{\prime}) \leq 1$ for $\forall a_{\pi}^{\prime}, \mathcal{B}^{\prime}$. Therefore, $\mathcal{H}Q(s, a) - Q^{\pi}(s, a)$ is a sub-convex combination of $\Delta Q(x, y)$ weighted by non-negative coefficients $w_{x, y}$ which sum to at most $\gamma$. Hence, the operator $\mathcal{H}$, which reduces the RETRACE operator to one-step return-based off-policy algorithms with AC-Off-POC, is a $\gamma$-contraction mapping around $Q^{\pi}$.
\end{proof}

\section{Experiments}
We conduct experiments to evaluate the effectiveness of the proposed approach. In Section~\ref{sec:evaluation}, we present the simulation results for the comparative evaluation with different IS methods and off-policy PG techniques. We conduct experiments under different batch and replay memory sizes, experience replay sampling methods, and divergence measures in Section~\ref{sec:sensitivity_analysis} as a sensitivity analysis. In addition, weights produced by AC-Off-POC are visualized and discussed in Section~\ref{sec:weight_analysis}. We use the MuJoCo~\citep{mujoco} and Box2D~\citep{box2d} benchmarks interfaced by OpenAI Gym~\citep{gym}. For reproducibility, we did not modify the environment dynamics and reward functions. The computing infrastructure used to produce the reported results is summarized in our repository\footref{our_repo}.

\subsection{Experimental Setup and Implementation}
\label{sec:exp_setup_and_implementation}
We apply our method to three baseline algorithms: Deep Deterministic Policy Gradient (DDPG)~\citep{ddpg}, Soft Actor-Critic (SAC)~\citep{sac}, and Twin Delayed DDPG (TD3)~\citep{td3}. For comparative evaluation, we consider each baseline with and without AC-Off-POC, the continuous IS method, RIS-Off-PAC, and off-policy PG techniques with multi-step bootstrapped Q-learning: ACE, Geoff-PAC, and IPG. In the sensitivity analysis, we use the experience replay sampling methods, Combined Experience Replay (CER)~\citep{cer}, Experience Replay Optimization (ERO)~\citep{ero}, and Prioritized Experience Replay (PER)~\citep{per}. The stochastic variant of AC-Off-POC is applied to SAC, while DDPG and TD3 use deterministic policies.

The environmental parameters used are in accordance with the OpenAI Baselines3 Zoo\footnote{\label{footnote:zoo}\url{https://github.com/DLR-RM/rl-baselines3-zoo}}~\citep{zoo}. Due to the initial exploration steps sampling actions from the environment's action space, policy distribution parameters remain inaccessible for stochastic AC-Off-POC. Therefore, we only use them once they become available after a given number of exploration time steps. In our DDPG implementation, we adhered closely to the parameters outlined in the originating paper. We added an initial 1000 frozen exploration time steps and amplified the batch size to 256 for a more substantial off-policy sample exposure during updates. To facilitate exploration and align with the reference Gaussian distribution, the exploration noise was replaced with a Gaussian with zero mean and standard deviation 0.1. The TD3 implementation uses the finetuned parameter setting found in the author's GitHub repository\footnote{\url{https://github.com/sfujim/TD3}}. The precise implementation details of the baseline algorithms can be accessed via our code\footref{our_repo}.

The multi-step PG methods, ACE, Geoff-PAC, and IPG are applied to the baselines by replacing the PG computation. The IS method RIS-Off-PAC is directly applied on top of the baseline algorithms. We follow the original papers to implement ACE, IPG, and RIS-Off-PAC. Our implementation of Geoff-PAC is based on the code from the authors' GitHub repository\footnote{\url{https://github.com/ShangtongZhang/DeepRL}}. We follow the exact structure given in the original papers to implement CER and ERO. Additionally, the repository\footnote{\url{https://github.com/sfujim/LAP-PAL}} is used to implement PER.

The competing algorithms introduce hyperparameters that either control the bias-variance trade-off or the on-policyness of the updates. We use the tuned hyperparameters provided in the original articles. In particular, we use the bias-variance trade-off value of $v = 1$ for IPG. As reported by~\citet{geoff_pac}, ACE is not sensitive for the bias parameter $\lambda_{1}$ on OpenAI Gym benchmarks and $\lambda_{1} = 0$ can produce sufficiently good results. Hence, we set $\lambda_{1} = 0$ for ACE. For Geoff-PAC, however, we employ the tuned hyperparameters of $\lambda_{1} = 0.7$, $\lambda_{2} = 0.6$, and $\hat{\gamma} = 0.2$ since they are reported to result in decent empirical performance. Finally, we use $0.2$ for the smoothing control parameter $\beta$ in RIS-Off-PAC. 

For all simulations, each algorithm is run for 1 million time steps with evaluations occurring every 1000 time steps, where an evaluation of an agent records the average reward over ten episodes in a distinct evaluation environment without exploration noise and updates. We report the average evaluation return of ten random seeds for initializing networks, simulators, and dependencies. Uniform sampling and an experience replay buffer of size 1 million are used in the comparative evaluations.

\begin{figure*}[!pbt]
\centering
    \begin{align*}
        &\text{{\blue} DDPG}  &&\text{{\orange} DDPG + AC-Off-POC} &&\text{{\green} DDPG + ACE} \\ &\text{{\purple} DDPG + Geoff-PAC} &&\text{{\red} DDPG + IPG} &&\text{{\pink} DDPG + RIS-Off-PAC}
    \end{align*}
	\subfigure{
		\includegraphics[width=0.24\textwidth, keepaspectratio]{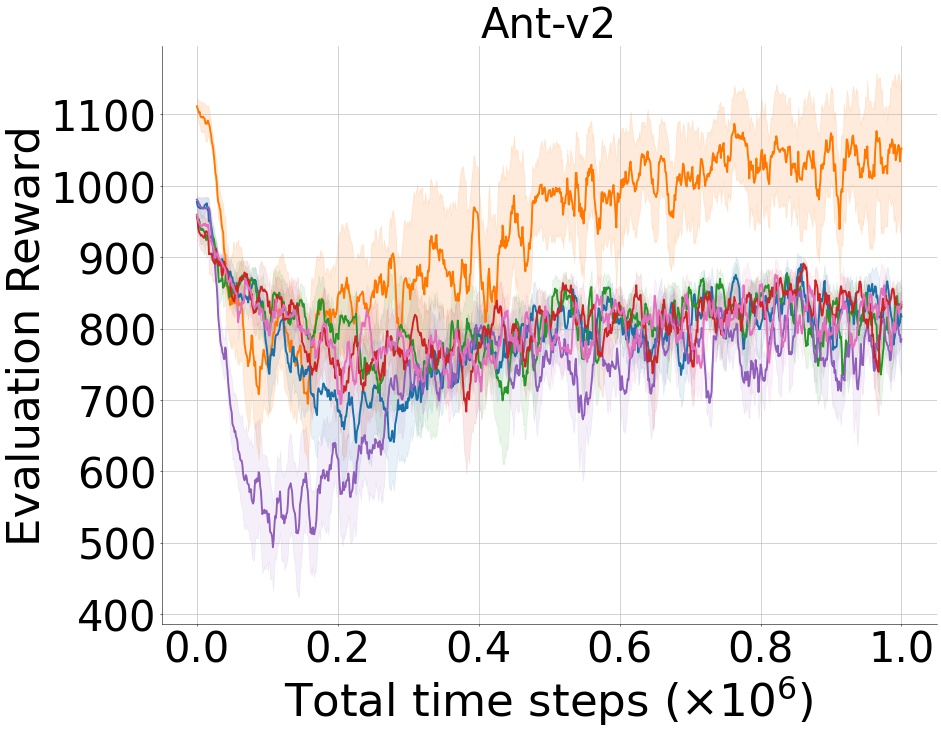}
		\includegraphics[width=0.24\textwidth, keepaspectratio]{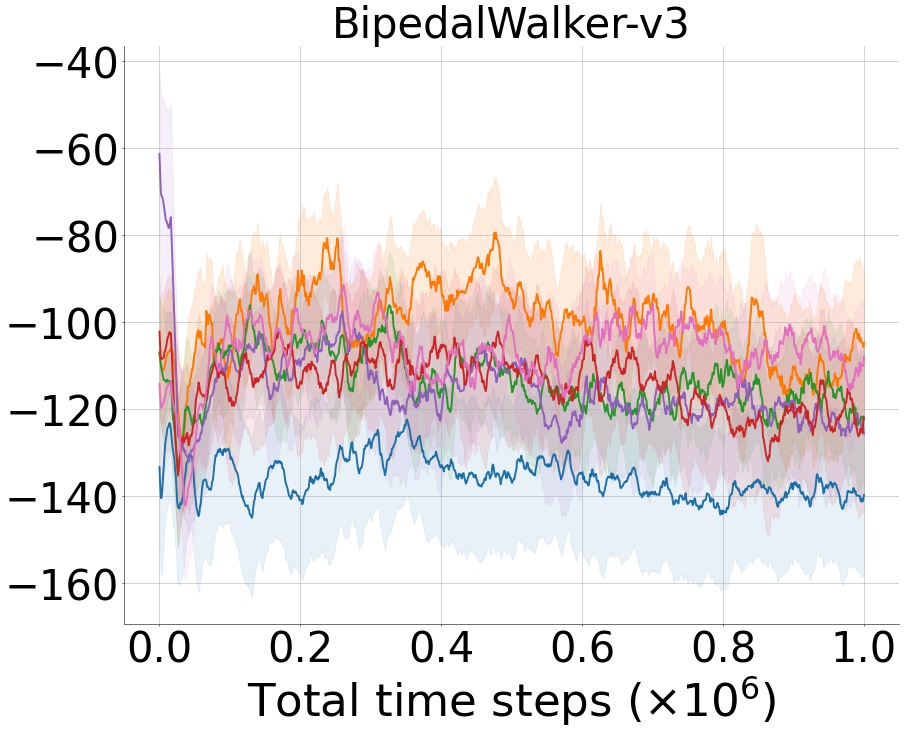} 
		\includegraphics[width=0.24\textwidth, keepaspectratio]{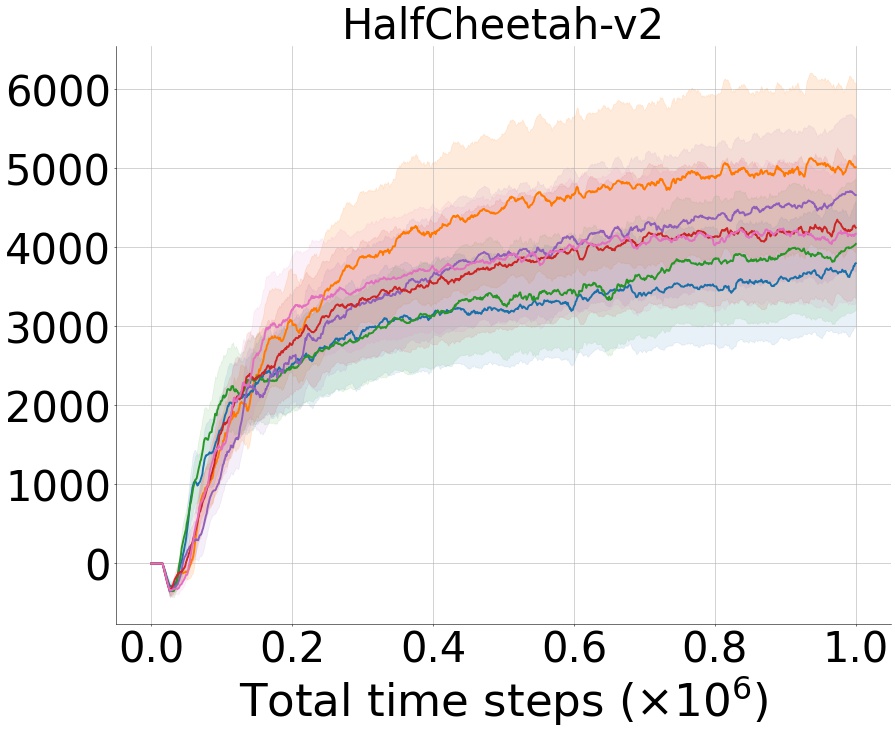}
		\includegraphics[width=0.24\textwidth, keepaspectratio]{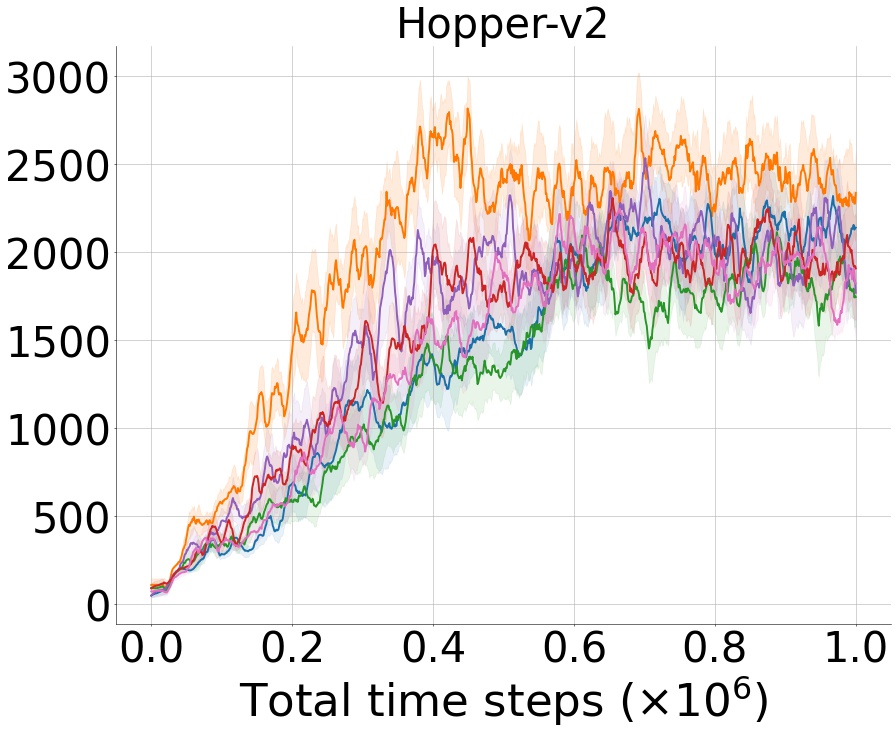} 
	}
	\\
	\subfigure{
		\includegraphics[width=0.24\textwidth, keepaspectratio]{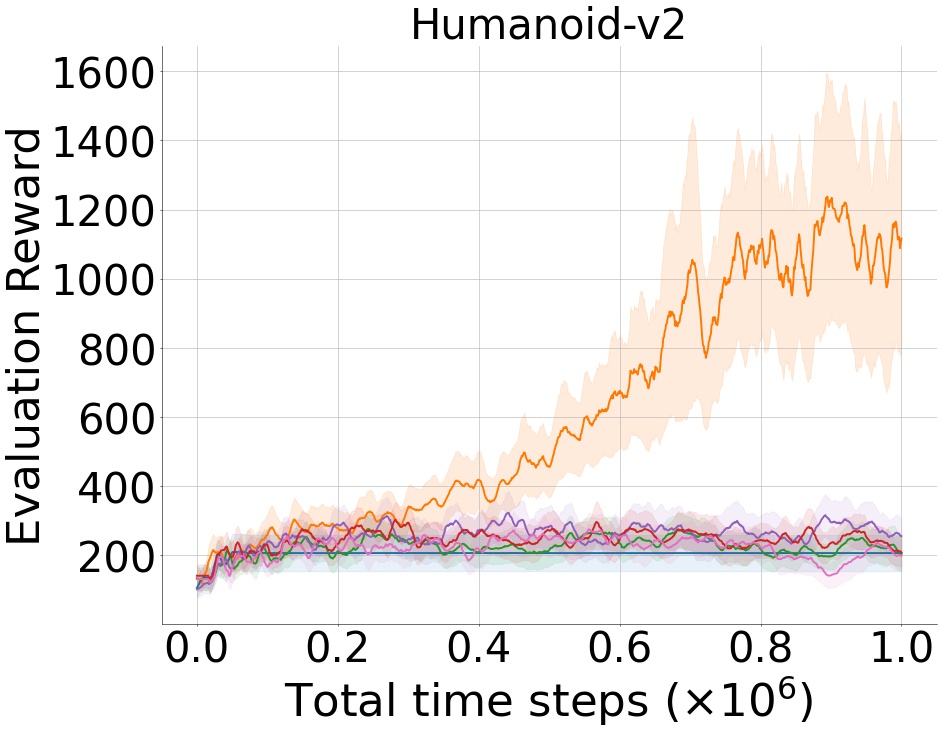}
		\includegraphics[width=0.24\textwidth, keepaspectratio]{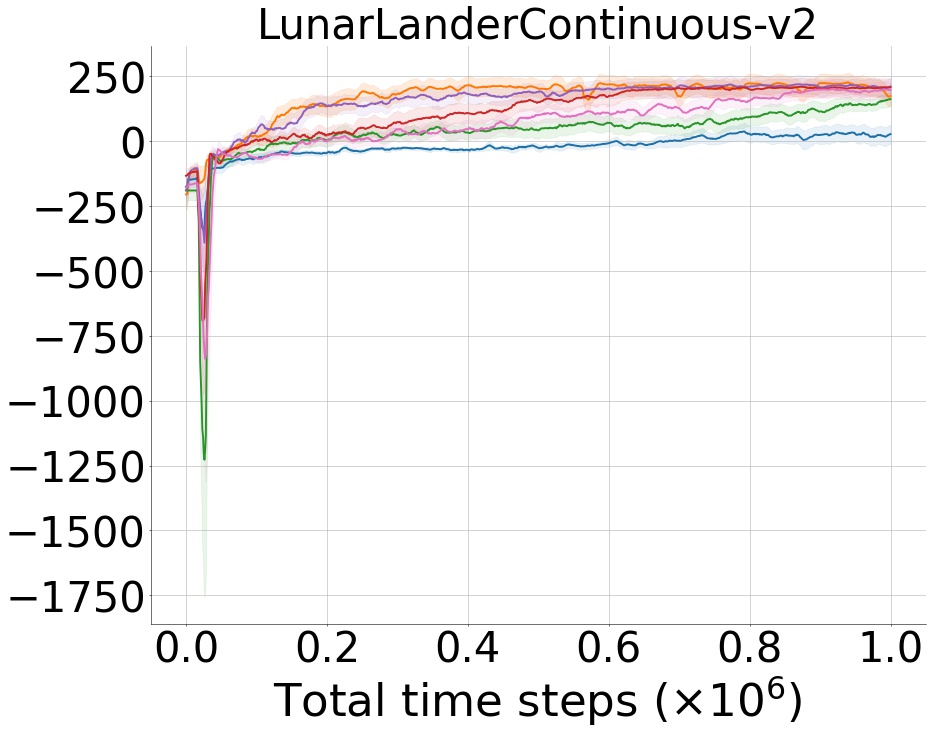}
		\includegraphics[width=0.24\textwidth, keepaspectratio]{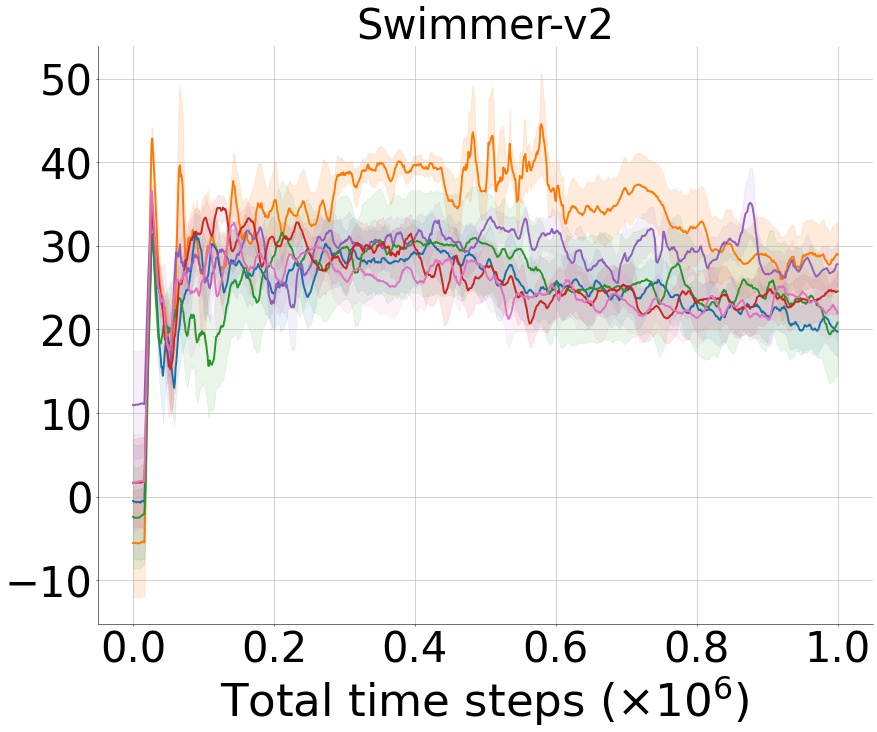}
		\includegraphics[width=0.24\textwidth, keepaspectratio]{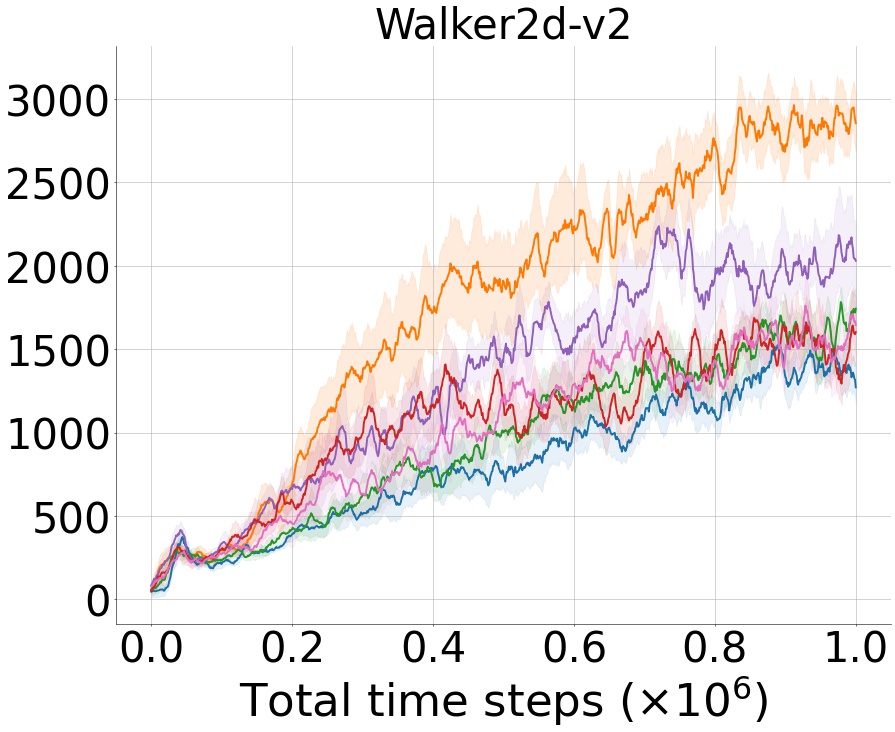} 
	}
	\caption{Evaluation curves for the set of OpenAI Gym continuous control tasks for 1 million training time steps over 10 random seeds under the DDPG algorithm. The shaded region represents a 95\% confidence interval over the trials. A sliding window of size 10 smoothes the curves for visual clarity.}
	\label{fig:eval_results_ddpg}
\end{figure*}

\subsection{Comparative Evaluation}
\label{sec:evaluation}

Evaluation results under the DDPG, SAC, and TD3 algorithms are reported in Figures~\ref{fig:eval_results_ddpg},~\ref{fig:eval_results_sac}, and~\ref{fig:eval_results_td3}, respectively. As learning curves are intertwined and hard to follow for some of the environments, we also report the average of the last ten evaluation returns, i.e., where the algorithms converge, in Tables~\ref{table:split_1} and~\ref{table:split_2}. Note that for some of the tasks, the baselines have worse performance than reported in the original articles. This is due to the stochasticity of the simulators and the used random seeds. Nevertheless, the performance difference between competing methods would be consistent if we used different sets of random seeds, regardless of where the baselines converge.

\begin{figure*}[t]
\centering
    \begin{align*}
        &\text{{\blue} SAC}  &&\text{{\orange} SAC + AC-Off-POC} &&\text{{\green} SAC + ACE} \\ &\text{{\purple} SAC + Geoff-PAC} &&\text{{\red} SAC + IPG} &&\text{{\pink} SAC + RIS-Off-PAC}
    \end{align*}
	\subfigure{
		\includegraphics[width=0.24\textwidth, keepaspectratio]{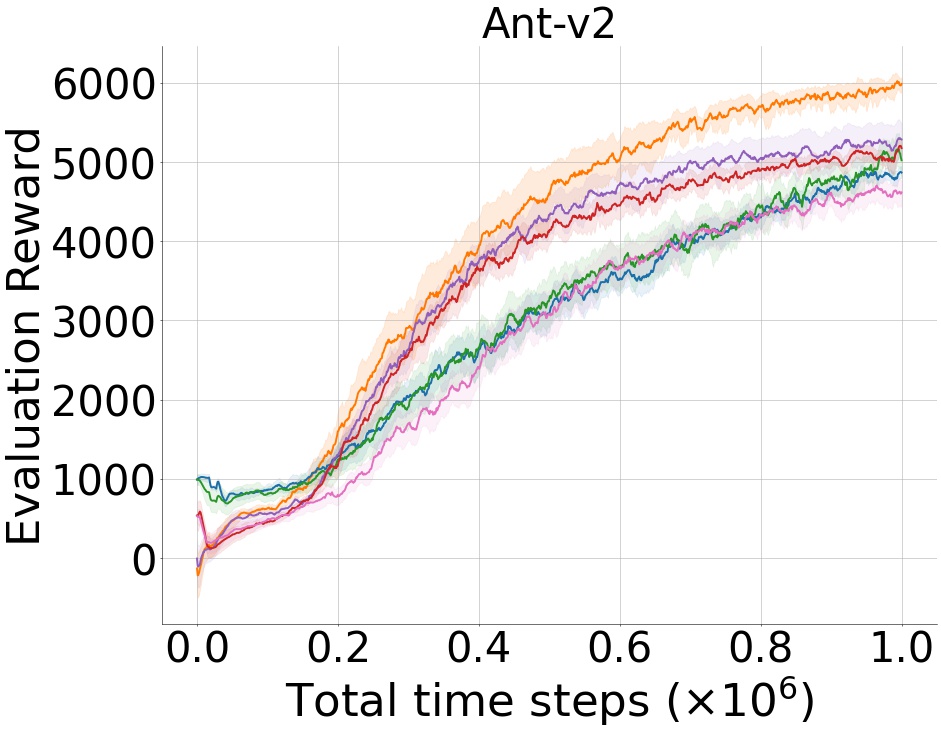}
		\includegraphics[width=0.24\textwidth, keepaspectratio]{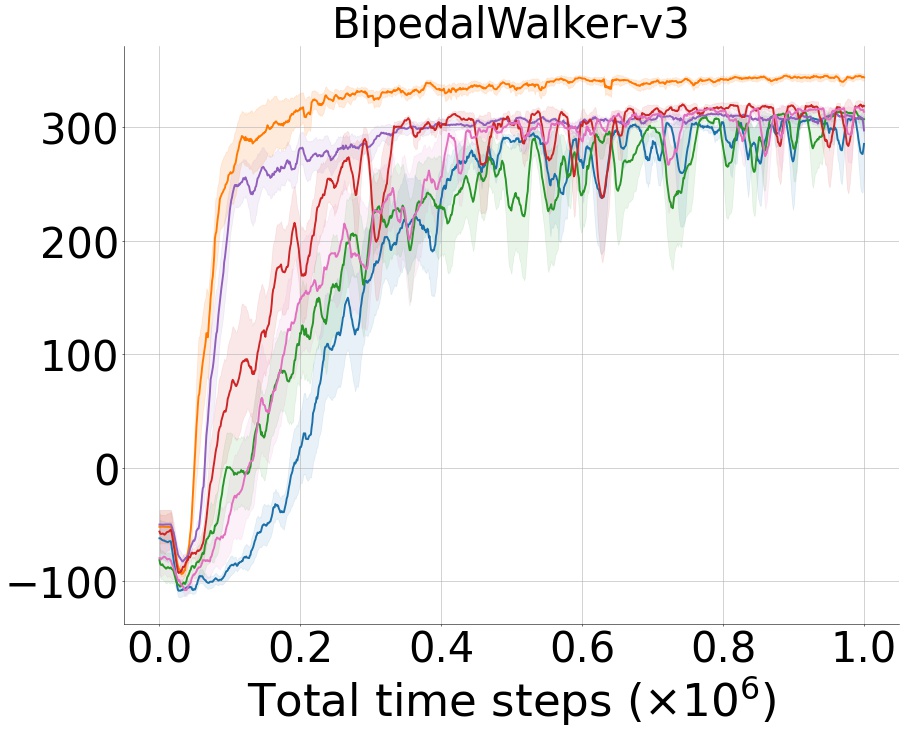}
		\includegraphics[width=0.24\textwidth, keepaspectratio]{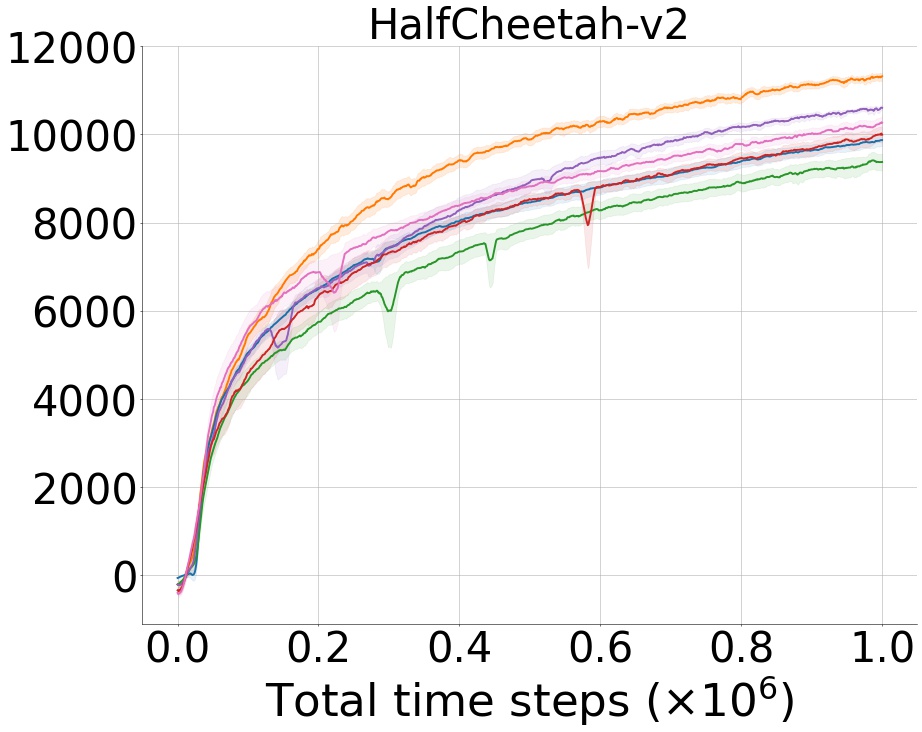}
		\includegraphics[width=0.24\textwidth, keepaspectratio]{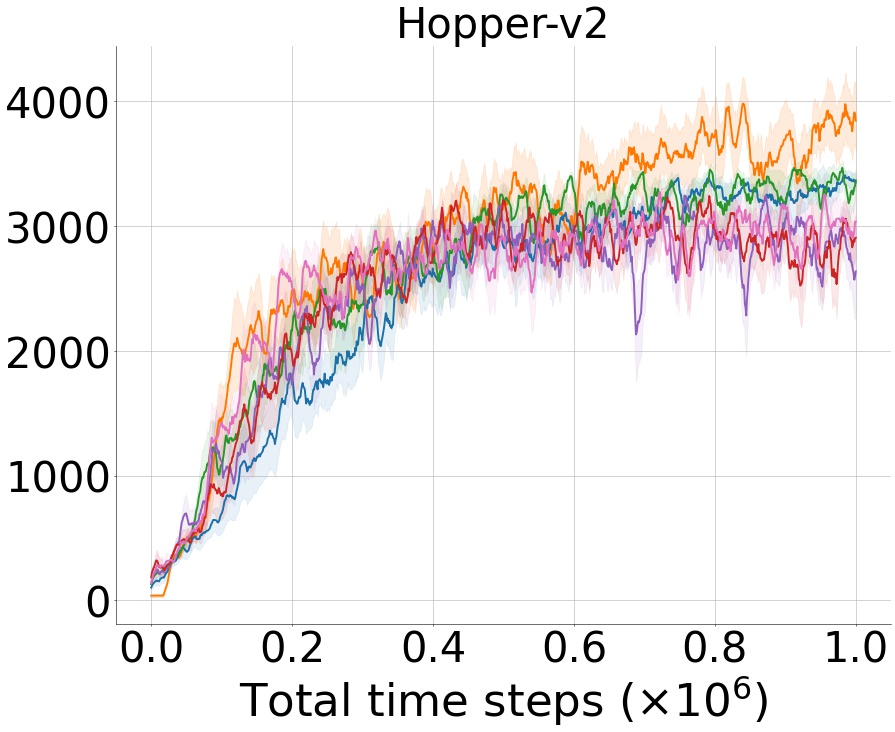} 
	} 
	\\
	\subfigure{
		\includegraphics[width=0.24\textwidth, keepaspectratio]{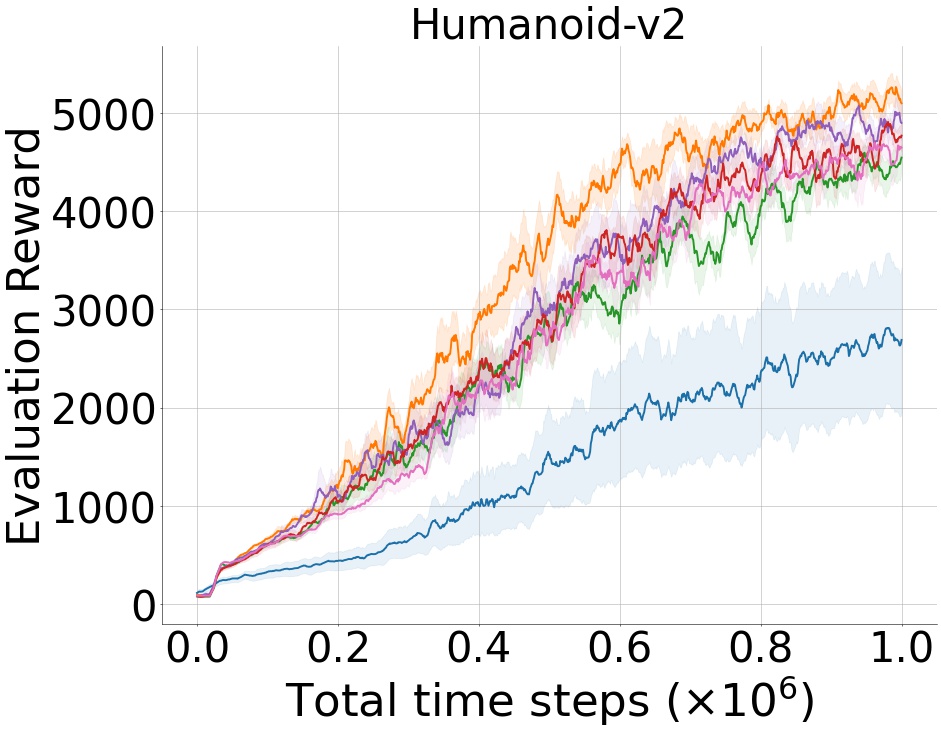}
		\includegraphics[width=0.24\textwidth, keepaspectratio]{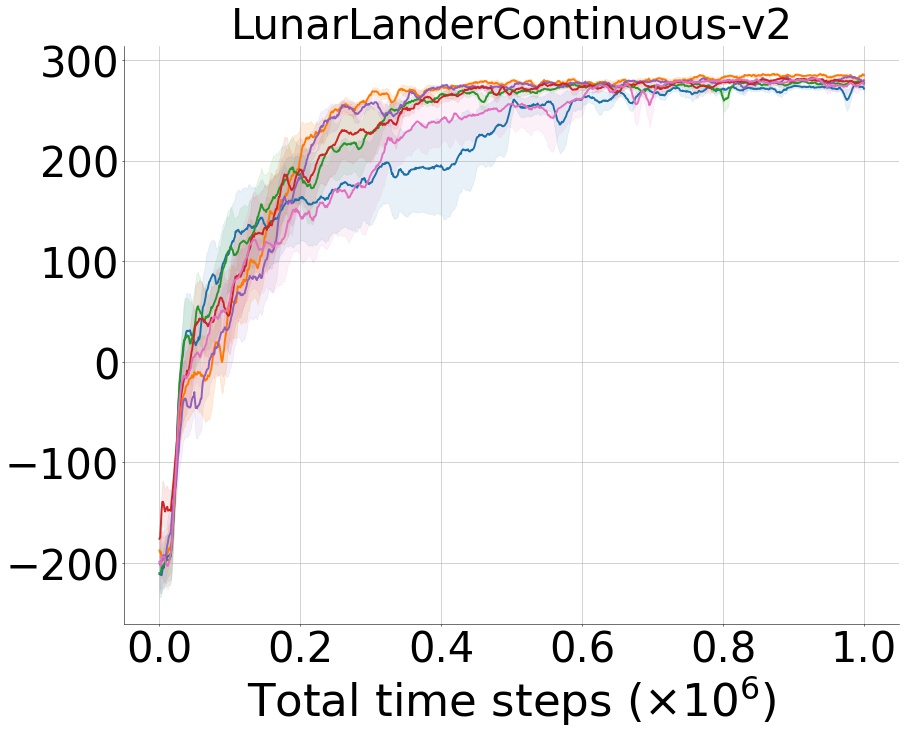} 
		\includegraphics[width=0.24\textwidth, keepaspectratio]{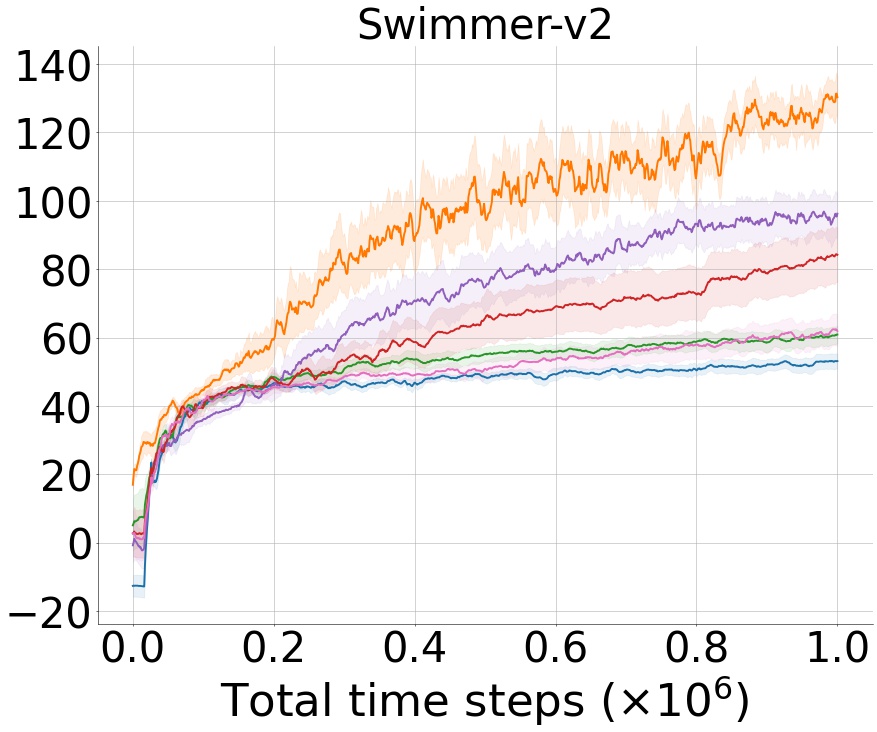}
		\includegraphics[width=0.24\textwidth, keepaspectratio]{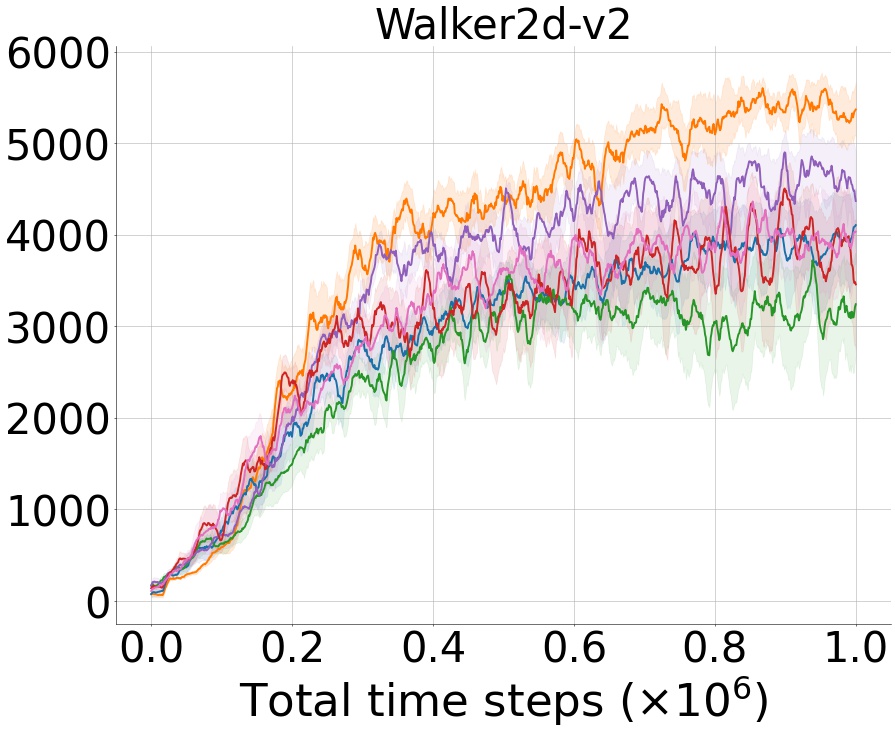} 
	} 
	\caption{Evaluation curves for the set of OpenAI Gym continuous control tasks for 1 million training time steps over 10 random seeds under the SAC algorithm. The shaded region represents a 95\% confidence interval over the trials. A sliding window of size 10 smoothes the curves for visual clarity.}
	\label{fig:eval_results_sac}
\end{figure*}

Both variants of AC-Off-POC demonstrate significant performance gains over the baseline algorithms and match or surpass the competing methods across all tested domains. This underscores the scalability of our method. In particularly challenging tasks, such as Ant, BipedalWalker, Humanoid, and Swimmer, where baselines often become trapped at local optima, the benefits of AC-Off-POC are especially evident. The complexities of these environments often originate from the high dimensional state-action spaces or the internal simulation dynamics, which consequently intensify the detrimental effects of off-policy samples. Therefore, the improvements granted by effective off-policy correction techniques become more significant. We note a substantial improvement offered by AC-Off-POC in environments such as BipedalWalker, Hopper, Swimmer, and Walker2d. It has been observed by~\citet{deep_rl_that_matters} that on-policy algorithms tend to significantly outperform off-policy methods in these tasks. This implies that these environments may heavily rely on on-policy samples for stable and efficient learning. Therefore, our method of correcting off-policy transitions maximizes the improvement, especially in environments where off-policy algorithms often excel, such as Ant, HalfCheetah, and Humanoid. Despite this, we still notice enhanced performance in these domains. This could be attributed to the effective elimination of highly divergent off-policy samples collected throughout the learning process, as the JSD measure only assigns low scores to transitions reflecting very distinct behavioral policies. Although the transitions reflecting different behavioral policies may not be similar, they are still incorporated into Q-learning and policy loss. This insight underscores the discussions regarding the use of JSD and the crucial role of off-policy samples in solving certain environments.

\begin{figure*}[t]
\centering
    \begin{align*}
        &\text{{\blue} TD3}  &&\text{{\orange} TD3 + AC-Off-POC} &&\text{{\green} TD3 + ACE} \\ &\text{{\purple} TD3 + Geoff-PAC}  &&\text{{\red} TD3 + IPG} &&\text{{\pink} TD3 + RIS-Off-PAC}
    \end{align*}
	\subfigure{
		\includegraphics[width=0.24\textwidth, keepaspectratio]{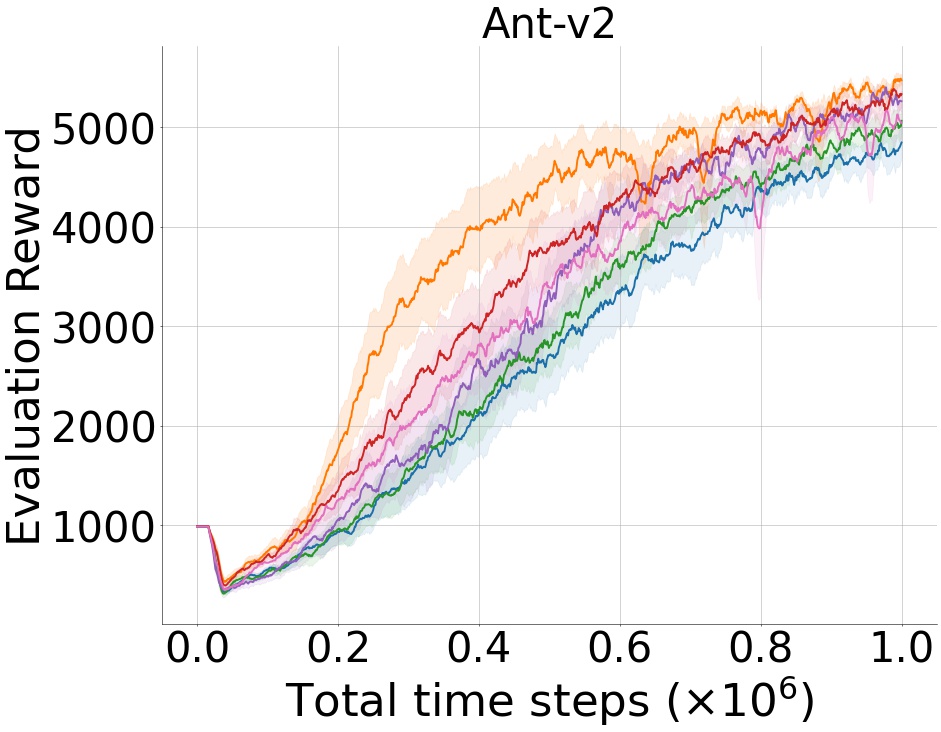}
		\includegraphics[width=0.24\textwidth, keepaspectratio]{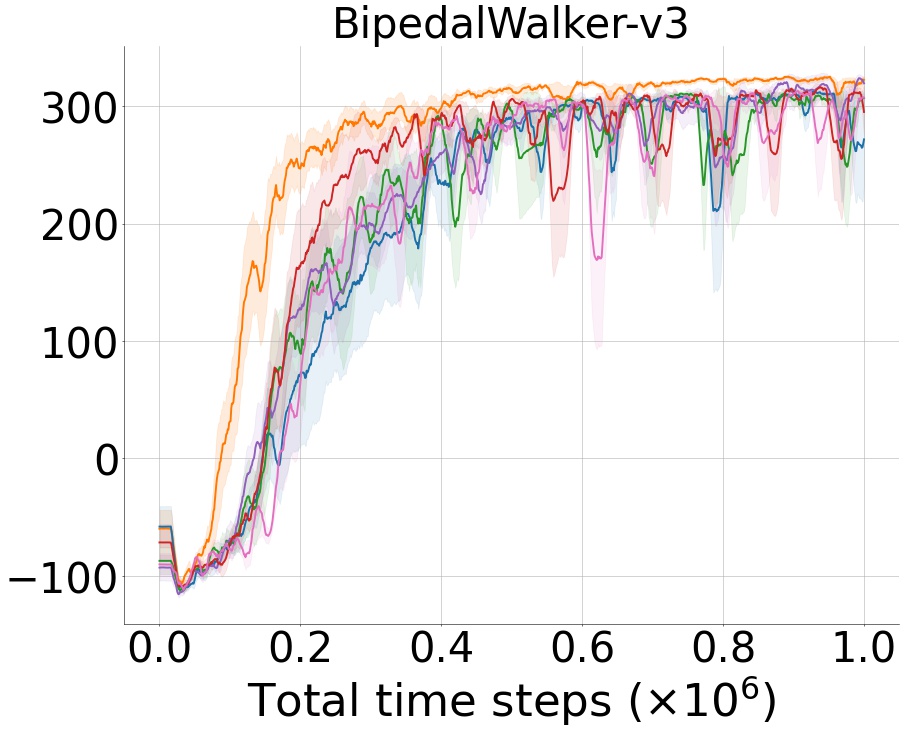}
		\includegraphics[width=0.24\textwidth, keepaspectratio]{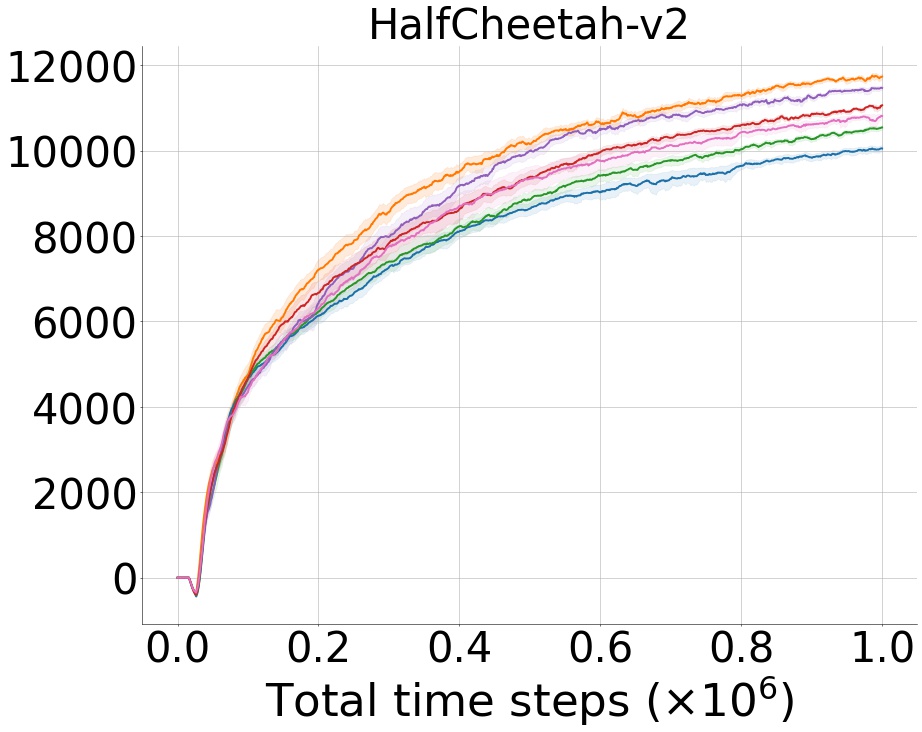}  
		\includegraphics[width=0.24\textwidth, keepaspectratio]{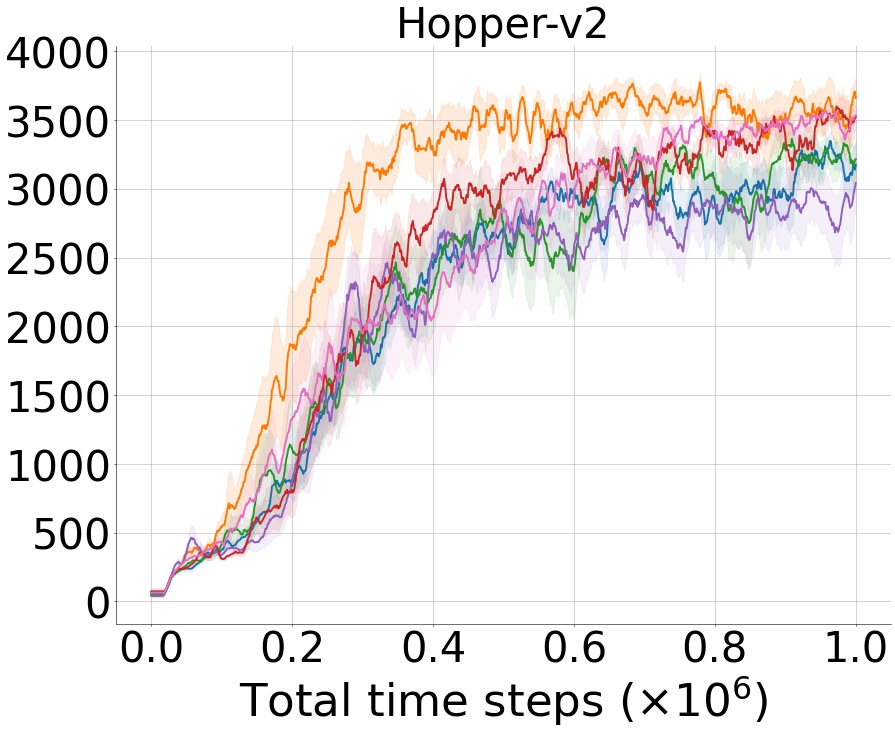}
	}
	\\
	\subfigure{
		\includegraphics[width=0.24\textwidth, keepaspectratio]{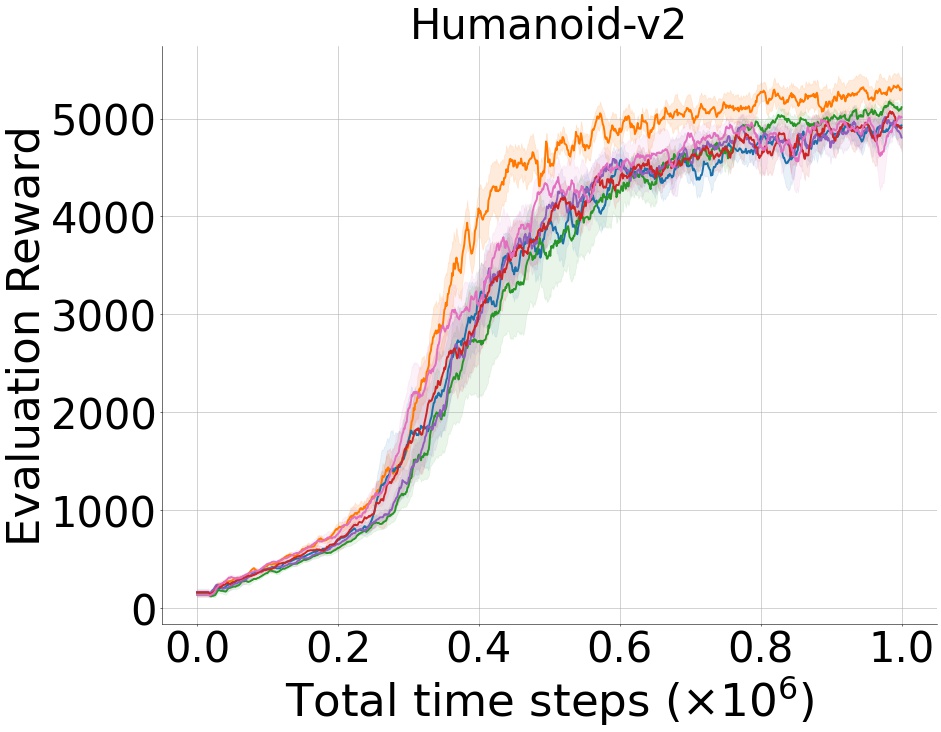}
		\includegraphics[width=0.24\textwidth, keepaspectratio]{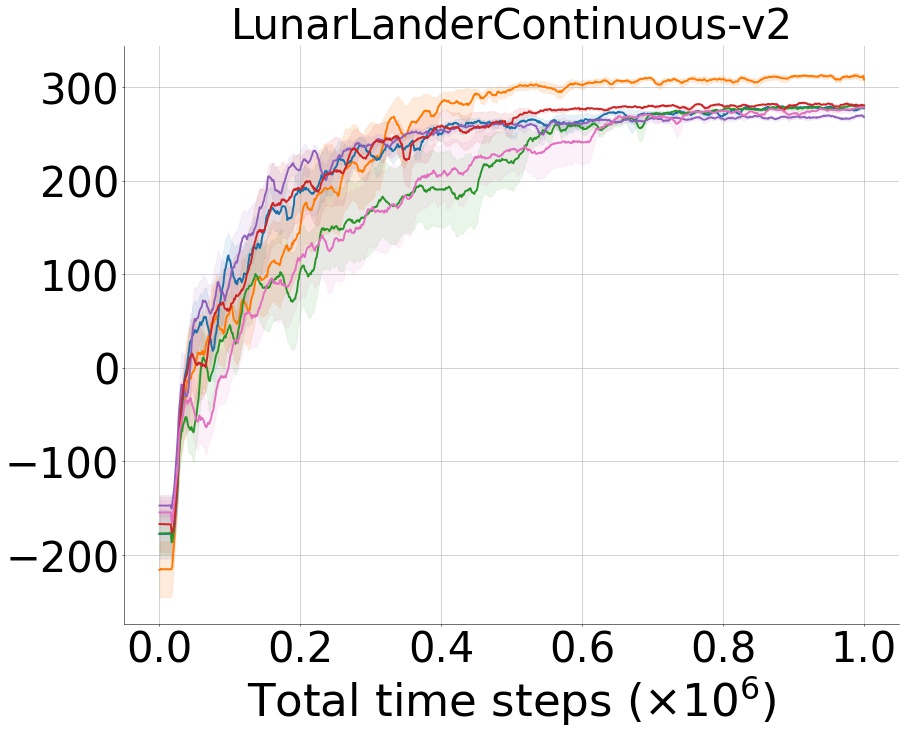} 
		\includegraphics[width=0.24\textwidth, keepaspectratio]{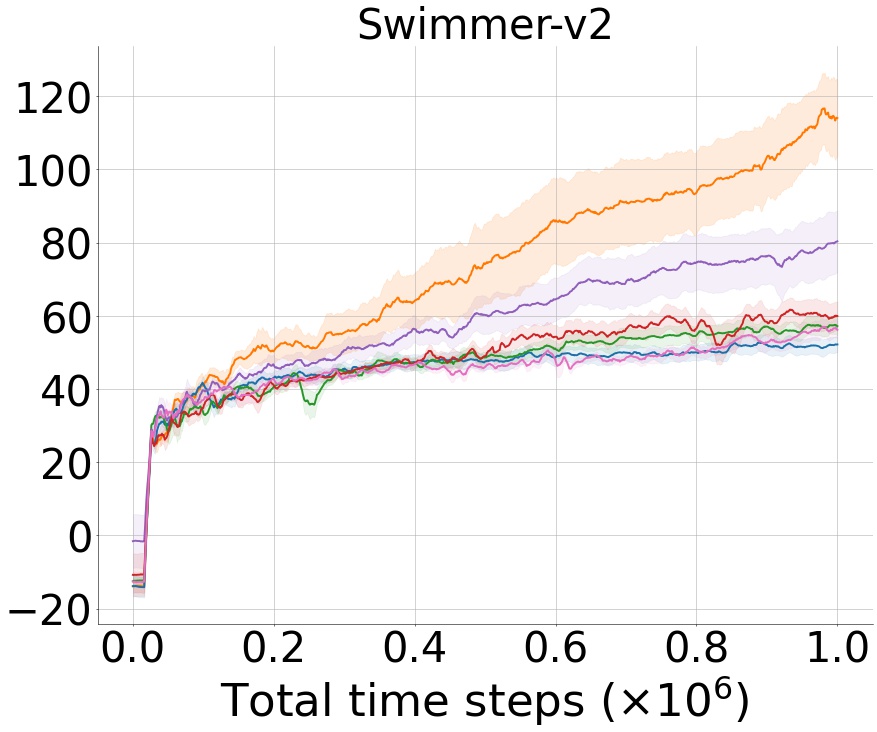}
		\includegraphics[width=0.24\textwidth, keepaspectratio]{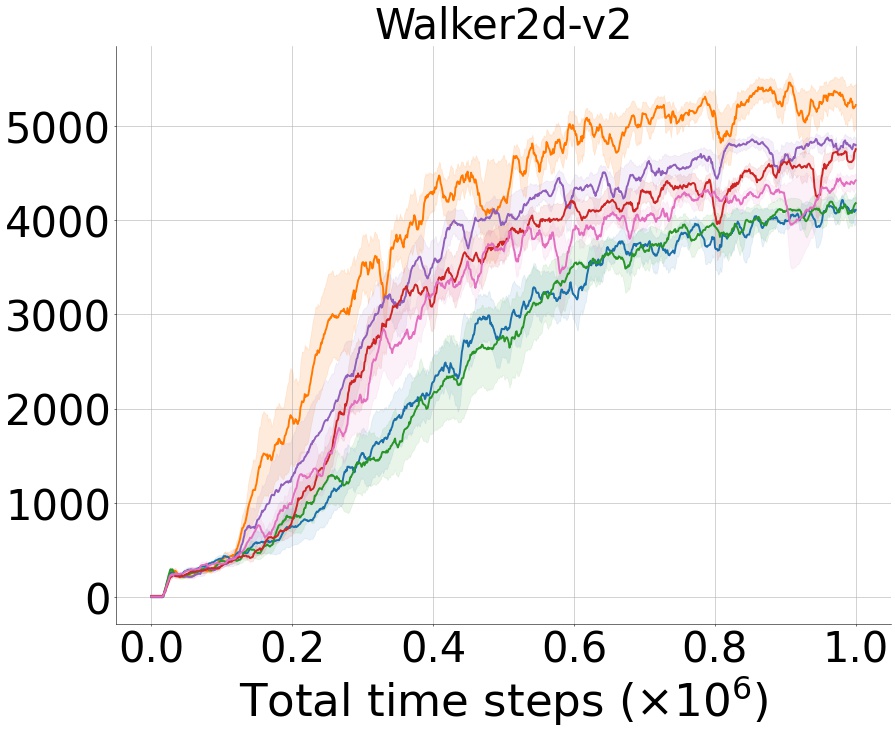} 
	}
	\caption{Evaluation curves for the set of OpenAI Gym continuous control tasks for 1 million training time steps over 10 random seeds under the TD3 algorithm. The shaded region represents a 95\% confidence interval over the trials. A sliding window of size 10 smoothes the curves for visual clarity.}
	\label{fig:eval_results_td3}
\end{figure*}

While the enhancements in performance under the SAC and TD3 algorithms remain relatively consistent, a marginally greater improvement is noticed for stochastic policies over deterministic ones when DDPG is included. However, the augmentation of deterministic policy performance becomes pronounced with the application of minibatch learning, as AC-Off-POC entirely discounts transitions within the sampled batch. These results suggests that a more substantial improvement can be expected for stochastic policies. Still, if the distribution of sampled transitions aligns on average with the current policy distribution, the performance can still be enhanced.

Competing methods display significant stability in unstable environments such as Ant, Hopper, and Walker2d. However, the corresponding performance improvements fall short as they directly discard off-policy samples through policy gradients or IS. Detailed examination reveals that Geoff-PAC and IPG yield the best, comparable results among competitors, while RIS-Off-PAC underperforms, and ACE ranks midway. RIS-Off-PAC, using the action-value generated from the behavioral policy to train the algorithm, is typically suboptimal during the learning process, which results in below-par baselines. The empirical performance of ACE is confined to simpler domains such as simple Markov chains or cart-pole balancing, as it was initially proposed for linear function approximation. Geoff-PAC, its deep function approximation extension, produces better baseline performance, as expected. However, we posit that the PG estimation of Geoff-PAC, despite its unbiased sample from the policy gradient, is still inaccurate and insufficient for an optimal performance. Lastly, IPG results in an on-policy deterministic actor-critic method when $v = 1$ is employed~\citep{ipg}. Our earlier discussion proposes that a complete on-policy PG can completely disregard off-policy samples. Nevertheless, off-policy samples can sometimes prove beneficial, depending on the task. Therefore, the maximal performance observed is not by IPG, but rather by AC-Off-POC.

\begin{table*}[!b]
    \centering
    \caption{Average return of last 10 evaluations over 10 trials
    of 1 million time steps for Ant, BipedalWalker, HalfCheetah, and Hopper. $\pm$ captures a 95\% confidence interval over the trials. Bold values represent the maximum under each baseline algorithm and environment.}
    \begin{adjustbox}{width=\textwidth}
    \begin{tabular}[t]{lcccc}
    \toprule%
    \textbf{Method}  & \textbf{Ant} & \textbf{BipedalWalker} & \textbf{HalfCheetah} & \textbf{Hopper}  \\        
    \midrule
    DDPG & 816.99 $\pm$ 165.29 & -139.73 $\pm$ 36.10 & 3798.16 $\pm$ 1586.74 & 2137.20 $\pm$ 537.64 \\
    DDPG + AC-Off-POC & \textbf{1052.28 $\pm$ 239.11} & \textbf{-104.79 $\pm$ 38.39} & \textbf{5012.32 $\pm$ 2155.64} & \textbf{2335.00 $\pm$ 765.7}2 \\
    DDPG + ACE & 829.96 $\pm$ 133.42 & -122.38 $\pm$ 33.63 & 4043.05 $\pm$ 1672.13 & 1742.74 $\pm$ 488.32  \\
    DDPG + Geoff-PAC & 784.67 $\pm$ 133.77 & -125.71 $\pm$ 36.54 & 4660.93 $\pm$ 1951.33 & 1767.06 $\pm$ 676.08 \\
    DDPG + IPG & 833.51 $\pm$ 88.32 & -121.91 $\pm$ 39.11 & 4248.09 $\pm$ 1796.15 & 1905.45 $\pm$ 731.71 \\
    DDPG + RIS-Off-PAC & 837.10 $\pm$ 107.24 & -107.97 $\pm$ 31.83 & 4171.61 $\pm$ 1753.42 & 1819.95 $\pm$ 450.00 \\ \\
    
    SAC & 4864.24 $\pm$ 296.94 & 285.37 $\pm$ 50.74 & 9871.42 $\pm$ 273.98 & 3351.43 $\pm$ 317.07 \\
    SAC + AC-Off-POC & \textbf{5983.03 $\pm$ 341.04} & \textbf{343.96 $\pm$ 6.93} & \textbf{11320.82 $\pm$ 262.23} & \textbf{3846.05 $\pm$ 777.89} \\
    SAC + ACE & 5019.17 $\pm$ 549.69 & 307.18 $\pm$ 17.98 & 9375.53 $\pm$ 410.10 & 3367.48 $\pm$ 313.79 \\
    SAC + Geoff-PAC & 5281.92 $\pm$ 471.66 & 297.24 $\pm$ 29.89 & 10604.40 $\pm$ 173.26 & 2636.44 $\pm$ 791.47 \\
    SAC + IPG & 5173.16 $\pm$ 274.65 & 318.80 $\pm$ 9.41 & 9993.52 $\pm$ 399.77 & 2906.38 $\pm$ 609.73 \\
    SAC + RIS-Off-PAC & 4613.55 $\pm$ 430.51 & 313.67 $\pm$ 12.80 & 10267.73 $\pm$ 325.33 & 3036.12 $\pm$ 534.42 \\ \\
    
    TD3 & 4846.66 $\pm$ 418.23 & 271.77 $\pm$ 79.60 & 10050.95 $\pm$ 168.03 & 3174.56 $\pm$ 401.05 \\
    TD3 + AC-Off-POC & \textbf{5472.19 $\pm$ 276.49} & \textbf{322.38 $\pm$ 6.92} & \textbf{11735.35 $\pm$ 196.97} & \textbf{3658.85 $\pm$ 354.52} \\
    TD3 + ACE & 5064.34 $\pm$ 376.07 & 306.96 $\pm$ 15.10 & 10543.87 $\pm$ 148.98 & 3215.57 $\pm$ 379.31 \\
    TD3 + Geoff-PAC & 5261.21 $\pm$ 416.70 & 319.52 $\pm$ 15.19 & 11467.90 $\pm$ 148.70 & 3041.78 $\pm$ 516.34 \\
    TD3 + IPG & 5332.47 $\pm$ 367.49 & 295.09 $\pm$ 44.95 & 11064.32 $\pm$ 143.90 & 3524.52 $\pm$ 247.26 \\
    TD3 + RIS-Off-PAC & 5045.43 $\pm$ 488.06 & 307.22 $\pm$ 16.90 & 10811.51 $\pm$ 166.86 & 3532.53 $\pm$ 128.64 \\
    \bottomrule
    \end{tabular}
    \end{adjustbox}
    \label{table:split_1}
\end{table*}

\begin{table*}[!t]
    \centering
    \caption{Average return of last 10 evaluations over 10 trials
    of 1 million time steps for Humanoid, LunarLanderContinuous, Swimmer, and Walker2d. $\pm$ captures a 95\% confidence interval over the trials. Bold values represent the maximum under each baseline algorithm and environment.}
    \begin{adjustbox}{width=\textwidth}
    \begin{tabular}[t]{lcccc}
    \toprule%
    \textbf{Method}  & \textbf{Humanoid} & \textbf{LunarLanderContinuous} & \textbf{Swimmer} & \textbf{Walker2d}  \\        
    \midrule
    DDPG & 206.80 $\pm$ 105.63 & 26.69 $\pm$ 73.32 & 19.71 $\pm$ 5.90 & 1270.86 $\pm$ 332.23 \\
    DDPG + AC-Off-POC & \textbf{1115.81 $\pm$ 641.76} & 173.02 $\pm$ 85.59 & \textbf{28.95 $\pm$ 7.88} & \textbf{2854.85 $\pm$ 555.45} \\
    DDPG + ACE & 204.14 $\pm$ 100.09 & 161.61 $\pm$ 58.27 & 20.82 $\pm$ 12.27 & 1742.84 $\pm$ 414.89 \\
    DDPG + Geoff-PAC & 255.75 $\pm$ 115.61 & \textbf{210.53 $\pm$ 70.60} & 27.80 $\pm$ 5.45 & 2030.25 $\pm$ 625.53 \\
    DDPG + IPG & 208.46 $\pm$ 84.78 & 207.11 $\pm$ 69.60 & 24.55 $\pm$ 5.47 & 1601.75 $\pm$ 516.69 \\
    DDPG + RIS-Off-PAC & 198.50 $\pm$ 81.86 & 196.59 $\pm$ 66.94 & 21.93 $\pm$ 6.86 & 1644.48 $\pm$ 462.71 \\ \\
    
    SAC & 2689.22 $\pm$ 1499.29 & 271.37 $\pm$ 9.82 & 53.11 $\pm$ 4.68 & 4103.54 $\pm$ 805.03 \\
    SAC + AC-Off-POC & \textbf{5098.33 $\pm$ 529.73} & \textbf{284.83 $\pm$ 7.09} & \textbf{130.29 $\pm$ 19.46} & \textbf{5366.81 $\pm$ 819.00} \\
    SAC + ACE & 4545.63 $\pm$ 415.48 & 277.18 $\pm$ 7.91 & 60.89 $\pm$ 5.46 & 3239.37 $\pm$ 1193.46 \\
    SAC + Geoff-PAC & 4897.81 $\pm$ 508.99 & 279.75 $\pm$ 9.46 & 96.24 $\pm$ 14.80 & 4367.23 $\pm$ 1217.94 \\
    SAC + IPG & 4768.96 $\pm$ 457.70 & 276.29 $\pm$ 11.56 & 84.20 $\pm$ 16.11 & 3456.34 $\pm$ 1486.13 \\
    SAC + RIS-Off-PAC & 4646.68 $\pm$ 459.60 & 276.06 $\pm$ 11.19 & 61.90 $\pm$ 9.43 & 4032.21 $\pm$ 976.43 \\ \\
    
    TD3 & 4929.35 $\pm$ 331.30 & 277.36 $\pm$ 5.42 & 52.18 $\pm$ 4.45 & 4107.08 $\pm$ 349.81 \\
    TD3 + AC-Off-POC & \textbf{5297.80 $\pm$ 308.16} & \textbf{308.05 $\pm$ 15.24} & \textbf{114.02 $\pm$ 21.51} & \textbf{5219.89 $\pm$ 583.36} \\
    TD3 + ACE & 5115.29 $\pm$ 219.01 & 279.49 $\pm$ 5.27 & 57.18 $\pm$ 5.30 & 4179.44 $\pm$ 267.09 \\
    TD3 + Geoff-PAC & 4805.35 $\pm$ 397.55 & 268.35 $\pm$ 6.36 & 80.36 $\pm$ 16.91 & 4795.22 $\pm$ 232.31 \\
    TD3 + IPG & 4908.21 $\pm$ 482.69 & 280.46 $\pm$ 5.32 & 59.94 $\pm$ 8.01 & 4751.56 $\pm$ 153.17 \\
    TD3 + RIS-Off-PAC & 5015.17 $\pm$ 340.12 & 278.38 $\pm$ 6.68 & 56.18 $\pm$ 5.13 & 4421.31 $\pm$ 194.74 \\
    \bottomrule
    \end{tabular}
    \end{adjustbox}
    \label{table:split_2}
\end{table*}

Consequently, off-policy samples usually degrade the performance due to the underlying distribution that substantially diverges from the current agent's policy. Nonetheless, off-policy methods may still require off-policy samples to learn the environment~\citep{dogan}. AC-Off-POC solves this issue by employing the Jensen-Shannon divergence, which obtains a smooth similarity measurement prior to the non-linear transformation. Its symmetric similarity measurement prevents the off-policy transitions from being heavily penalized and allows them to contribute to the learning progress even with a small proportion. Considering that the hyperparameters introduced by the competing approaches increase the computational complexity and parameter-free AC-Off-PAC attains superior performance, we believe that our method provides significant gains over the prior work.

\begin{table*}[!t]
    \centering
    \caption{Average return over the last 10 evaluations over 10 trials
    of 1 million time steps, comparing the impact of minibatch sizes $\{32, 256, 1024, 2048\}$. $\pm$ captures a 95\% confidence interval over the trials. Bold values represent the maximum under each baseline algorithm and environment.}
    \begin{tabular}[t]{lcccc}
    \toprule
    \textbf{Setting}  & \textbf{HalfCheetah} & \textbf{Hopper} & \textbf{Walker2d} \\        
    \midrule
        SAC (32) & 9092.21 $\pm$ 312.47 & 3304.77 $\pm$ 348.16 & 3707.69 $\pm$ 844.41 \\
        SAC (256) & 9871.42 $\pm$ 273.98 & 3351.43 $\pm$ 317.07 & 4103.54 $\pm$ 805.03 \\
        SAC (1024)& 8766.95 $\pm$ 290.90 & 2700.17 $\pm$ 342.32 & 3848.68 $\pm$ 927.21 \\
        SAC (2048)& 8477.68 $\pm$ 322.43 & 2930.11 $\pm$ 324.41 & 3222.36 $\pm$ 930.57 \\
        
        SAC + AC-Off-POC (32) & 11022.58 $\pm$ 276.83 & 3470.82 $\pm$ 777.71 & 5135.93 $\pm$ 860.30 \\
        SAC + AC-Off-POC (256)& \textbf{11320.82 $\pm$ 262.23} & \textbf{3846.05 $\pm$ 777.89} & \textbf{5366.81 $\pm$ 819.00} \\
        SAC + AC-Off-POC (1024)& 10459.58 $\pm$ 284.13 & 3269.79 $\pm$ 833.98 & 5242.90 $\pm$ 823.80 \\
        SAC + AC-Off-POC (2048) & 9215.60 $\pm$ 303.78 & 3558.28 $\pm$ 893.85 & 4300.63 $\pm$ 918.97 \\ \\ 
        
        TD3 (32)& 8955.77 $\pm$ 181.85 & 2961.18 $\pm$ 434.44 & 3966.54 $\pm$ 377.57 \\
        TD3 (256)& 10050.95 $\pm$ 168.03 & 3174.56 $\pm$ 401.05 &  4107.08 $\pm$ 349.81 \\
        TD3 (1024)& 9963.48 $\pm$ 185.11 & 2847.66 $\pm$ 408.16 & 3881.20 $\pm$ 351.81 \\
        TD3 (2048)& 8712.29 $\pm$ 194.69 & 2630.64 $\pm$ 455.97 & 4033.33 $\pm$ 351.51 \\
        TD3 + AC-Off-POC (32)& 9104.90 $\pm$ 249.96 & 3112.20 $\pm$ 411.33 & 4951.56 $\pm$ 698.94 \\
        TD3 + AC-Off-POC (256)& \textbf{11735.35 $\pm$ 196.97} & \textbf{ 3658.85 $\pm$ 354.52} & \textbf{5219.89 $\pm$ 583.36} \\
        TD3 + AC-Off-POC (1024)& 8791.17 $\pm$ 252.00 & 3126.42 $\pm$ 426.60 & 5040.96 $\pm$ 607.94 \\
        TD3 + AC-Off-POC (2048)& 6982.50 $\pm$ 248.80 & 2176.53 $\pm$ 469.73 & 4713.68 $\pm$ 730.17 \\
    \bottomrule
    \end{tabular}
    \label{table:ablation_bs}
\end{table*}

\subsection{Sensitivity Analysis}
\label{sec:sensitivity_analysis}
The batch and replay memory sizes and different experience replay sampling algorithms can considerably impact the learning. Therefore, we perform experiments on both variants of AC-Off-POC under different experience replay buffer and minibatch sizes and sampling algorithms. We evaluate each baseline and AC-Off-POC with minibatch sizes of 32, 256, 1024, and 2048 and the replay memory of sizes 1 million (1M) and 100,000 (100K). We also investigate the performance improvement by AC-Off-POC under the sampling algorithms of CER, ERO, and PER. The same experimental setting given in Section~\ref{sec:exp_setup_and_implementation} is used. Unless otherwise stated, a minibatch size of 256, replay memory of size 1 million transitions, and uniform sampling are used as the default setting.

\subsubsection{Impact of the Minibatch Size}
Table~\ref{table:ablation_bs} demonstrates the resulting performances for different batch sizes. Firstly, all methods show optimal performance with a batch size of 256, both in terms of mean and confidence, as they were originally tuned~\citep{sac,td3}. When applying stochastic AC-Off-POC to the SAC algorithm, it is observed that the relative performance improvement brought by AC-Off-POC remains consistent across different batch sizes. This can be attributed to the transition-wise similarity measurement unique to the stochastic variant, meaning that when implemented with stochastic policies, AC-Off-POC is not influenced by minibatch learning and size.

However, the batch-wise similarity measurement in the deterministic variant for TD3 is considerably affected by the minibatch size. For all environments, an increase in batch size negatively impacts the performance of AC-Off-POC and falls short of the baseline performance. Similar performance degradation occurs for smaller batch sizes due to the inaccurate estimation of the JSD between two Gaussians with an insufficient number of samples. In summary, our batch size sensitivity analysis empirically verifies Observation~\ref{rem:det_batch_size_analysis}.

\begin{table*}[!tp]
    \centering
    \caption{Average return over the last 10 evaluations over 10 trials
    of 1 million time steps, comparing the impact of replay buffer sizes $\{100000, 1000000\}$. $\pm$ captures a 95\% confidence interval over the trials. Bold values represent the maximum under each baseline algorithm and environment.}
    \label{table:ablation_rb}
    \begin{tabular}[t]{lcccc}
        \toprule
        \textbf{Setting}  & \textbf{HalfCheetah} & \textbf{Hopper} & \textbf{Walker2d} \\      
        \midrule
            SAC (100K)& 8596.65 $\pm$ 326.68 & 3446.68 $\pm$ 368.41 & 4117.61 $\pm$ 937.39 \\
            SAC (1M)& 9871.42 $\pm$ 273.98 & 3351.43 $\pm$ 317.07 & 4103.54 $\pm$ 805.03 \\
            SAC + AC-Off-POC (100K)& 8290.28 $\pm$ 309.43 & \textbf{3959.37 $\pm$ 744.74} & \textbf{5471.21 $\pm$ 788.24} \\
            SAC + AC-Off-POC (1M)& \textbf{11320.82 $\pm$ 262.23} & 3846.05 $\pm$ 777.89 & 5366.81 $\pm$ 819.00 \\ \\
            
            TD3 (100K)& 9343.67 $\pm$ 188.52 & 3200.30 $\pm$ 403.49 & 4183.48 $\pm$ 350.12 \\
            TD3 (1M)& 10050.95 $\pm$ 168.03 & 3174.56 $\pm$ 401.05 &  4107.08 $\pm$ 349.81 \\
            TD3 + AC-Off-POC (100K)& 9068.78 $\pm$ 255.80 & \textbf{3790.57 $\pm$ 494.83} & \textbf{5302.83 $\pm$ 670.45} \\
            TD3 + AC-Off-POC (1M)& \textbf{11735.35 $\pm$ 196.97} & 3658.85 $\pm$ 354.52 & 5219.89 $\pm$ 583.36 \\
        \bottomrule
    \end{tabular}
\end{table*}

\subsubsection{Impact of the Experience Replay Buffer Size}

Table~\ref{table:ablation_rb} shows the effect of varying replay memory sizes, implying substantial shifts in performance in terms of both mean rewards and confidence. In the HalfCheetah environment, decreasing the replay memory size significantly impairs the performance across all methods. This can be attributed to the environment dynamics of HalfCheetah, which typically necessitates off-policy samples for efficient solution. We consider a First-In, First-Out (FIFO) buffer in all experiments, meaning that a smaller replay memory will contain more on-policy samples compared to a buffer of size 1 million transitions.

In Hopper and Walker2d, a smaller memory size leads to better performance, supporting the evaluation results from Section~\ref{sec:evaluation}. This suggests that off-policy correction in these environments produces more substantial performance enhancement compared to HalfCheetah, as the policy is trained in a more on-policy manner. Similarly, we find that the deterministic variant of AC-Off-POC with a smaller buffer underperforms the baseline in HalfCheetah, but markedly improves the deterministic baseline algorithm in the Hopper and Walker2d environments.

\begin{table*}[!t]
    \centering
    \caption{Average return over the last 10 evaluations over 10 trials
    of 1 million time steps, comparing the impact of uniform, CER, ERO, and PER sampling methods. $\pm$ captures a 95\% confidence interval over the trials. Bold values represent the maximum under each baseline algorithm and environment.}
    \begin{tabular}[t]{lcccc}
        \toprule
        \textbf{Setting}  & \textbf{HalfCheetah} & \textbf{Hopper} & \textbf{Walker2d} \\      
        \midrule
            SAC (uniform) & 9871.42 $\pm$ 273.98 & 3351.43 $\pm$ 317.07 & 4103.54 $\pm$ 805.03 \\
            SAC (CER) & 10539.72 $\pm$ 285.12 & 3442.93 $\pm$ 311.60 & 4105.39 $\pm$ 712.35 \\
            SAC (ERO) & 9497.97 $\pm$ 284.17 & 3393.43 $\pm$ 315.16 & 4033.84 $\pm$ 843.72 \\
            SAC (PER) & 8953.30 $\pm$ 296.70 & 2891.34 $\pm$ 317.16 & 3844.24 $\pm$ 832.49 \\
            SAC + AC-Off-POC (uniform) & \textbf{11320.82 $\pm$ 262.23} & 3846.05 $\pm$ 777.89 & 5366.81 $\pm$ 819.00 \\
            SAC + AC-Off-POC (CER) & 9642.60 $\pm$ 209.10 & \textbf{3905.79 $\pm$ 760.39} & \textbf{5449.23 $\pm$ 802.84} \\ 
            SAC + AC-Off-POC (ERO) & 11006.71 $\pm$ 272.90 & 3865.37 $\pm$ 765.77 & 5409.00 $\pm$ 827.40 \\ 
            SAC + AC-Off-POC (PER) & 10388.39 $\pm$ 297.94 & 3493.84 $\pm$ 840.43 & 4402.69 $\pm$ 963.22 \\ \\
            
            TD3 (uniform) & 10050.95 $\pm$ 168.03 & 3174.56 $\pm$ 401.05 &  4107.08 $\pm$ 349.81 \\
            TD3 (CER) & 10628.20 $\pm$ 173.55 & 3190.81 $\pm$ 400.69 & 4151.20 $\pm$ 342.95 \\
            TD3 (ERO) & 9591.56 $\pm$ 168.77 & 3187.00 $\pm$ 403.67 & 4013.53 $\pm$ 361.32 \\
            TD3 (PER) & 9578.80 $\pm$ 172.61 & 2637.58 $\pm$ 407.95 & 3447.57 $\pm$ 350.60 \\
            TD3 + AC-Off-POC (uniform) & \textbf{11735.35 $\pm$ 196.97} & 3658.85 $\pm$ 354.52 & 5219.89 $\pm$ 583.36 \\
            TD3 + AC-Off-POC (CER) & 9851.44 $\pm$ 177.24 & \textbf{3692.22 $\pm$ 335.98} & 
            \textbf{5367.62 $\pm$ 486.86} \\
            TD3 + AC-Off-POC (ERO) & 11345.45 $\pm$ 205.72 & 3664.95 $\pm$ 352.06 & 5039.16 $\pm$ 589.69 \\
            TD3 + AC-Off-POC (PER) & 10648.33 $\pm$ 231.66 & 3467.80 $\pm$ 415.79 & 4357.27 $\pm$ 622.60 \\
            \bottomrule
    \end{tabular}
    \label{table:ablation_sm}
\end{table*}

\subsubsection{Impact of Different Experience Replay Sampling Algorithms}
Table~\ref{table:ablation_sm} reports the resulting performances across different sampling algorithms. We first observe that CER is superior among all the sampling methods. This is due to the most recent collected transition, which is included in each update. Hence, all updates always occur with at least a single on-policy sample. A performance improvement can still be obtained when AC-Off-POC is applied to the baselines while sampling is performed through CER. However, AC-Off-POC slightly underperforms the baselines in the HalfCheetah environment under CER because of the discussed off-policy sample requirement. ERO performs poorly when applied to the baseline algorithms due to the internal structure of the algorithm, which is not in the scope of this study. Nevertheless, AC-Off-POC still improves ERO when applied to the baseline algorithms. In addition, PER usually underperforms when applied to an actor-critic method. This is also an expected result due to the widely known suboptimal empirical performance of PER~\citep{la3p}. Although ERO and PER exhibit poor performance due to the discussed drawbacks, AC-Off-POC can still attain higher rewards. Therefore, we infer that AC-Off-POC can readily adapt to various experience replay sampling algorithms.

Finally, the results obtained under different experience replay sampling schemes support our motivation behind concentrating on one-step returns. Specifically, the recent advances in experience replay research use temporally uncorrelated transitions in sampled batches. Although our algorithm can be generalized to the multi-step returns (i.e., trajectories), we focus on one-step algorithms for our method to be compatible with the current experience replay sampling techniques and the ones that will be introduced in the future.

\subsubsection{Impact of the Divergence Measure}
We test the resulting performance of AC-Off-POC when JSD is replaced with KL divergence. These results are provided in Table~\ref{table:ablation_kl}. We obtain higher results with JSD in all of the environments. However, for the HalfCheetah environment, KL divergence significantly degrades the performance. As it severely penalizes transitions under which the policy distribution differs from the policy of interest, most off-policy transitions are not included in the gradient computation due to the corresponding small weights. As discussed, the HalfCheetah environment usually requires those off-policy transitions, and since the agent does not effectively use them, the performance drops. Overall, these empirical studies also verify our discussions on the advantages of JSD over KL divergence.

\begin{table*}[!btp]
    \centering
    \caption{Average return over the last 10 evaluations over 10 trials
    of 1 million time steps, comparing the measures of the Jensen-Shannon divergence and KL divergence. $\pm$ captures a 95\% confidence interval over the trials. Bold values represent the maximum under each baseline algorithm and environment.}
    \begin{tabular}[t]{lcccc}
        \toprule
        \textbf{Setting}  & \textbf{HalfCheetah} & \textbf{Hopper} & \textbf{Walker2d} \\      
        \midrule
            SAC + AC-Off-POC (KL) & 10747.20 $\pm$ 219.16 & 3567.88 $\pm$ 373.95 & 4980.37 $\pm$ 620.16 \\
            SAC + AC-Off-POC (JSD) & \textbf{11320.82 $\pm$ 262.23} & \textbf{3846.05 $\pm$ 777.89} & \textbf{5366.81 $\pm$ 819.00} \\ \\
            
            TD3 + AC-Off-POC (KL) & 10112.36 $\pm$ 200.21 & 3449.99 $\pm$ 362.96 & 4915.94 $\pm$ 612.65 \\
            TD3 + AC-Off-POC (JSD) & \textbf{11735.35 $\pm$ 196.97} & \textbf{3658.85 $\pm$ 354.52} & 
            \textbf{5219.89 $\pm$ 583.36} \\
            \bottomrule
    \end{tabular}
    \label{table:ablation_kl}
\end{table*}

\subsection{Weight Analysis}
\label{sec:weight_analysis}
To validate the accuracy of the weights produced by deterministic AC-Off-POC, we conduct experiments employing the experiences of an actor trained under the TD3 algorithm in the HalfCheetah and Hopper tasks, over 1 million time steps. During the training phase, the transitions executed by the agent were stored in temporal order. Then, we start by sampling the transitions with a window size equivalent to the minibatch size centered at each transition. For each sampled batch, the similarity weight $\lambda$ was computed with the deterministic variant of AC-Off-POC with respect to the trained agent's policy.

Figure~\ref{fig:deterministic_weight_analysis} presents the normalized values for time steps and similarity weights produced by AC-Off-POC. From the first transition executed by a random policy to the last transition executed by the expert agent, the sampled batches increasingly become on-policy in relation to the expert agent's policy. It can be observed that as the center of the sliding window nears the last transition, the AC-Off-POC weights approach the value 1. Therefore, the weights rise as the on-policyness of the sampled batch augments, which is observed from the linear trajectory followed by the AC-Off-POC weights. Overall, it can be inferred that the deterministic variant of AC-Off-POC yields accurate estimates with negligible error. However, it does not employ temporally correlated trajectories or any action probability estimate.

\begin{figure*}[!t]
    \centering
    \begin{align*}
        &\text{{\orange} Time step of the sliding window center} \\ 
        &\text{{\purple} AC-Off-POC weights produced by the trained agent's policy}
    \end{align*}
    \subfigure[Deterministic (TD3)]{
		\includegraphics[width=0.235\textwidth, keepaspectratio]{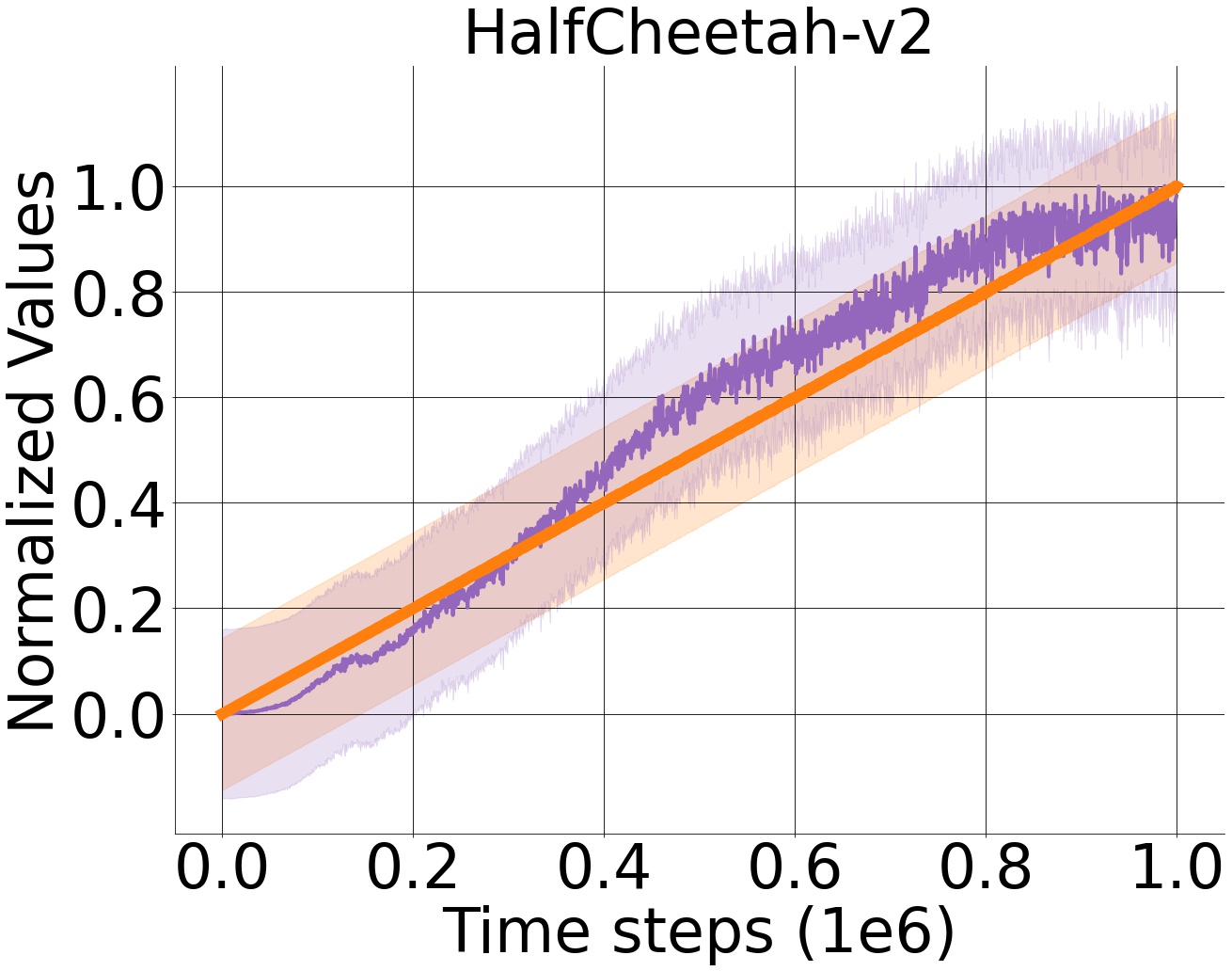}
		\includegraphics[width=0.235\textwidth, keepaspectratio]{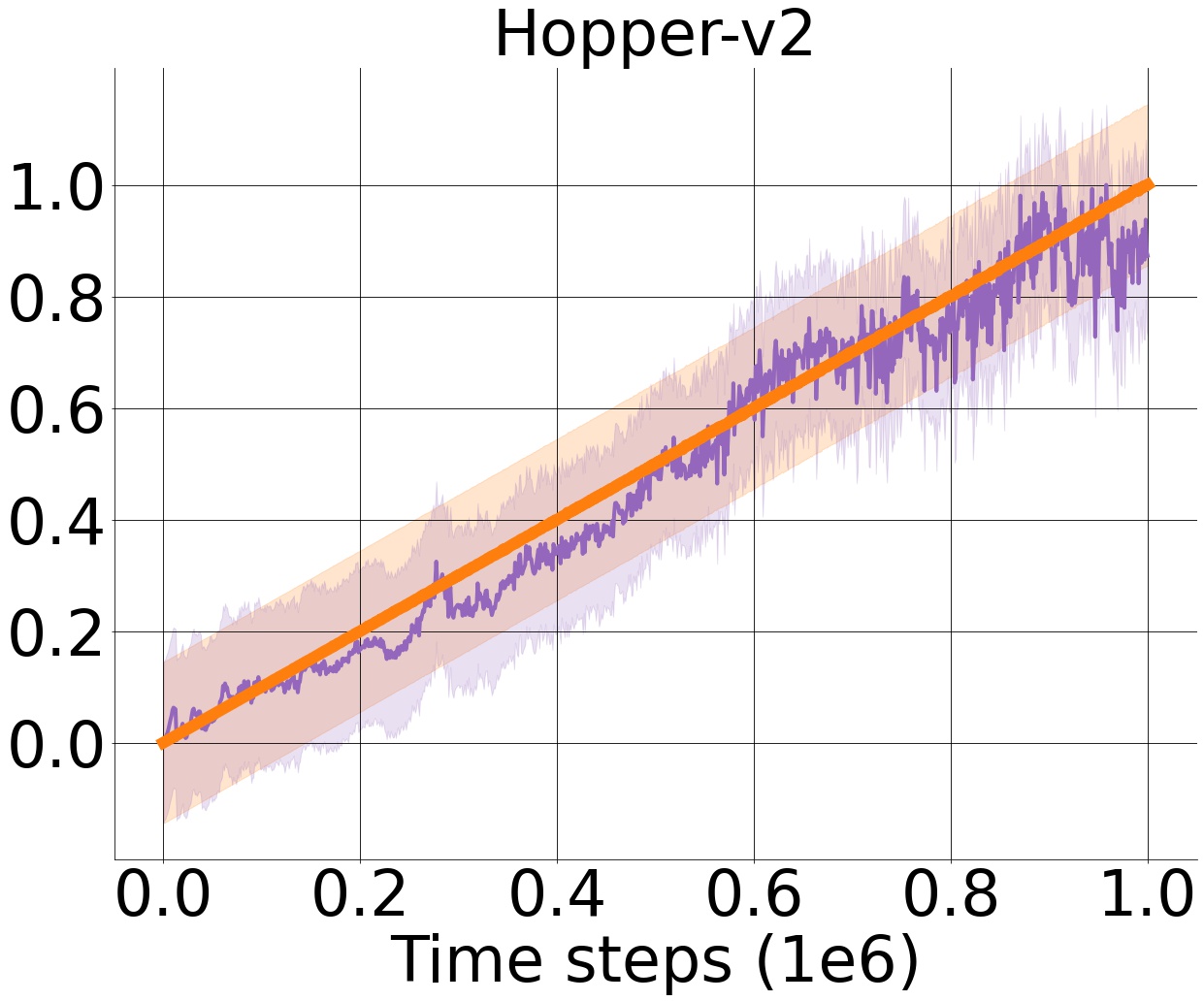}
	\label{fig:deterministic_weight_analysis}}
    \subfigure[Stochastic (SAC)]{
		\includegraphics[width=0.235\textwidth, keepaspectratio]{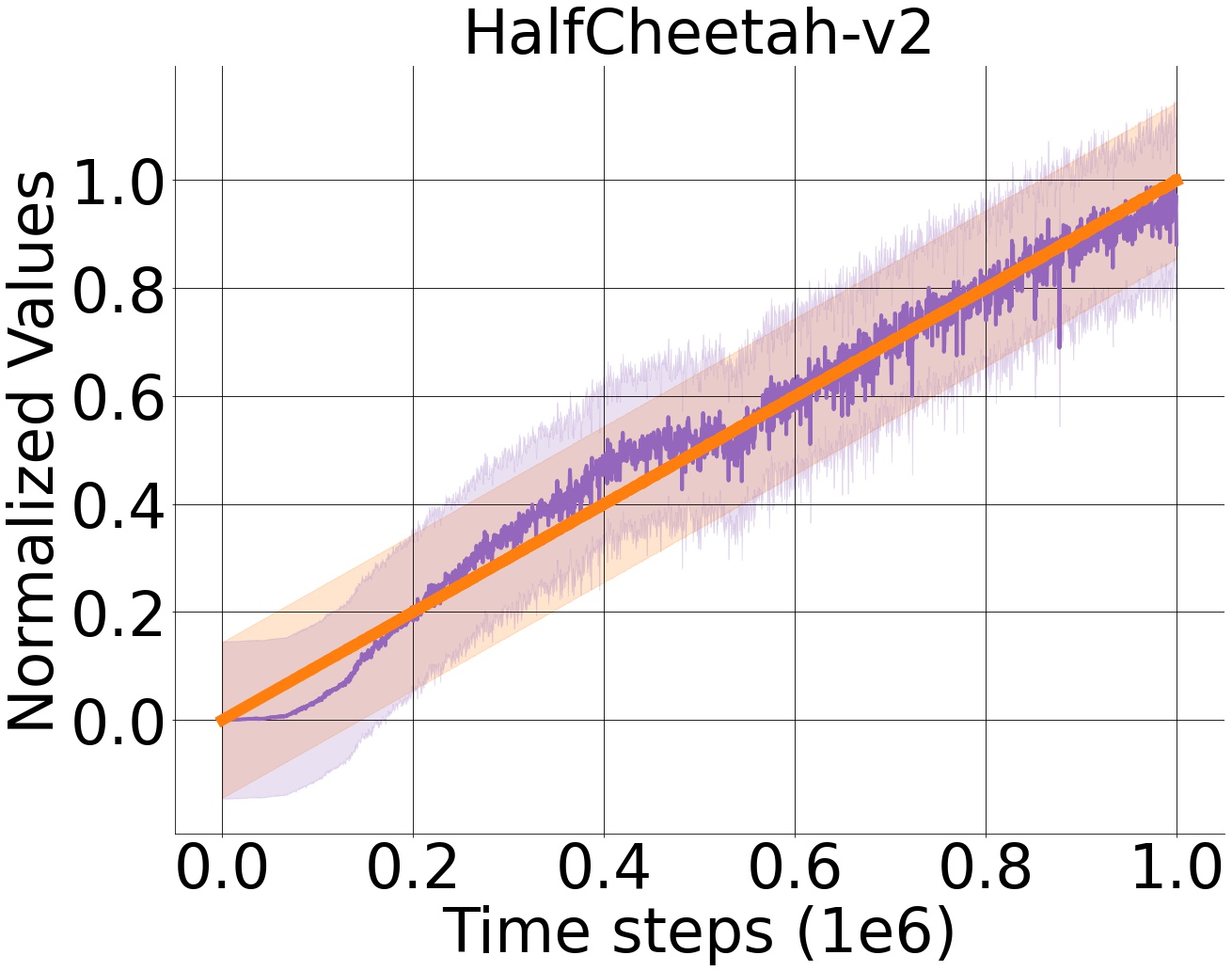}
		\includegraphics[width=0.235\textwidth, keepaspectratio]{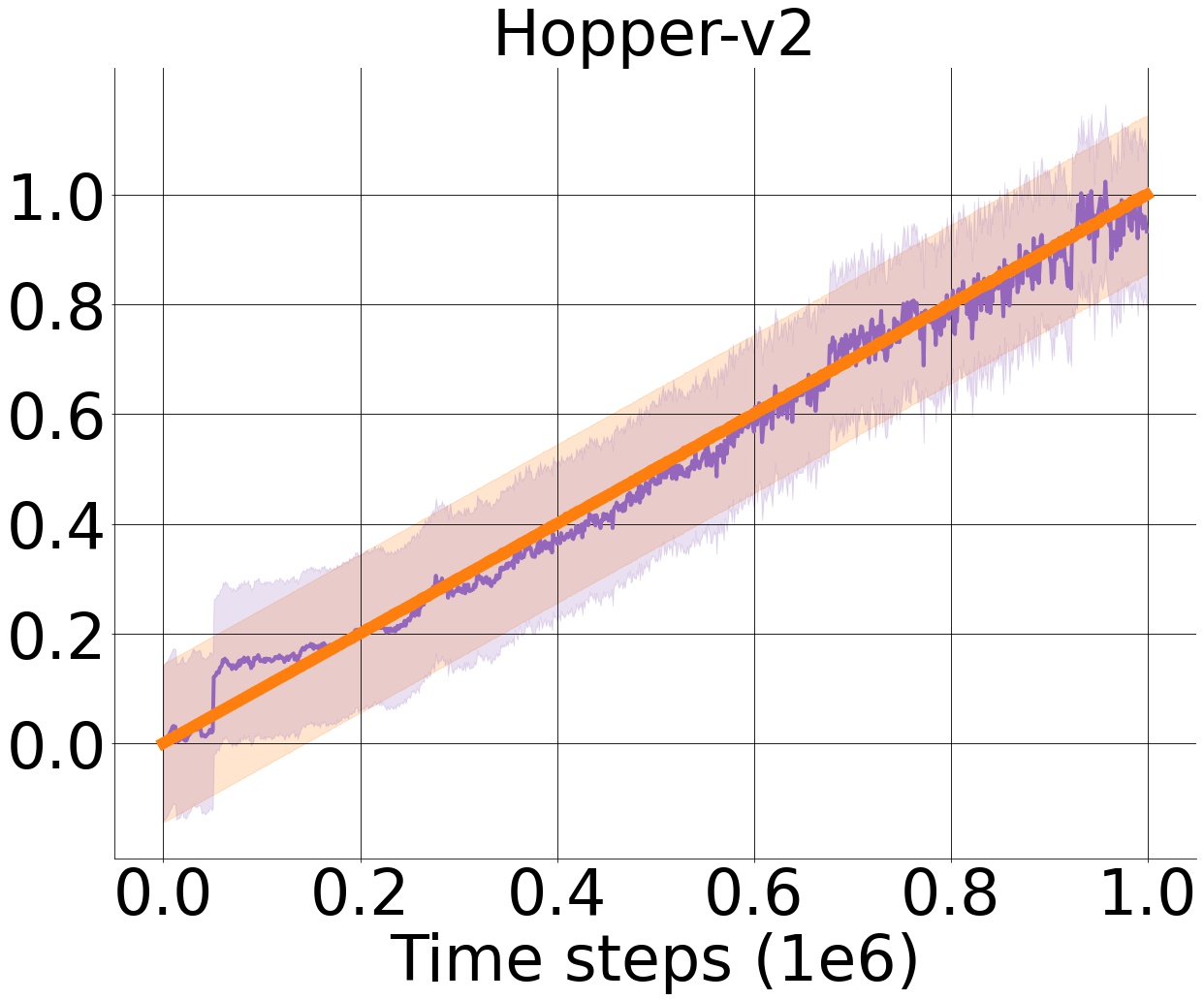}
	\label{fig:stochastic_weight_analysis}}
	\caption{Temporally ordered batches of transitions versus AC-Off-POC weights produced by the trained agent's policy under the TD3 and SAC algorithms for the set of OpenAI Gym continuous control tasks over 1 million training time steps. The shaded region represents a 95\% confidence interval of the normalized values over the trials.}
\end{figure*}

We perform the same set of experiments to verify the accuracy of the stochastic variant of AC-Off-POC. As the stochastic variant assigns unique weights to each off-policy sample, we directly depict the weights assigned to each transition contained in the expert agent's experience replay buffer. In Figure~\ref{fig:stochastic_weight_analysis}, we observe that the stochastic variant also produces accurate similarity weights to schedule the contribution of each loss component with respect to off-policy samples. Moreover, the confidence of the weights is more robust, and weights are more accurate due to the direct use of the policy distributions in computing the dissimilarity between the target and behavioral policies. In conclusion, as discussed, AC-Off-POC can produce accurate similarity weights by preventing vanishing or exploding gradients and the employment of trajectories due to the transition-wise or temporally uncorrelated batch-wise similarities.

\section{Future Work}
Our technique also has some limitations that open up interesting directions for future work. One limitation is that the choice of Jensen-Shannon divergence can be restrictive as it requires choosing parametric distributions over the action set and storing the distribution parameters along with the $(s,a,r,s^{\prime})$ tuple in the buffer. This contrasts with the importance weights based on propensity ratio that do not enforce any parametric assumption on the distribution and do not require storing any distribution parameters. A possible extension of our technique is to explore other divergence measures that can relax the parametric assumption and reduce the storage requirement.

Another limitation is that our technique relies on sampling actions from the target policy to estimate the divergence term. This can introduce variance and bias in the estimation, especially when the target policy is very different from the behavioral policy. A possible improvement of our technique is to leverage the entire probability distribution of the target policy instead of sampling actions from it. This can allow us to employ a policy search algorithm to directly learn a policy for solving the underlying MDP of the environment by reducing the variance substantially through multi-importance sampling. Alternatively, if we want to maintain a value function and use its values to learn a policy for a given MDP, i.e., actor-critic methods, then our approach remains a credible off-policy correction technique that also corrects the expected return estimated by the Q-network. We hope that our work will inspire further research on off-policy correction techniques for deep Q-learning variants and their applications to various RL problems.

\section{Conclusion}
We discuss \emph{off-policy correction}, which aims to mitigate the detrimental effects induced by the mismatch between the underlying distributions of the agent's policy and the previously collected data. While off-policy correction is usually performed through off-policy policy gradients (PG) in actor-critic methods, we address that they cannot apply to the policies approximated by the deterministic neural networks since they require action probabilities. Moreover, previous studies usually approach the off-policy correction from the PG side, that is, they adapt the conventional PG techniques that are on-policy by construction to off-policy learning. To this end, we introduce the AC-Off-POC algorithm as a complementary alternative to the prior works. In contrast to off-policy policy gradients, our method approaches the problem from the off-policy data side without modifying the policy gradients used. AC-Off-POC can achieve an efficient one-step off-policy correction and improve the data efficiency by reweighting the contribution of each off-policy sample in optimizing the value function and policy. We support our claims with theoretical analysis to show that it obtains a bounded contraction mapping that offers a safe single-step off-policy correction.

An extensive set of empirical studies demonstrates that AC-Off-POC improves the state-of-the-art and outperforms the competing off-policy correction methods by a considerable margin. By alleviating the bias induced by the off-policy data, our method can attain faster convergence and optimal policies by disregarding the transitions executed by behavioral policies that highly deviate from the current policy in terms of the numerical action decisions. Our method also resolves computational cost issues by not introducing hyperparameters or additional networks. Moreover, we show that a generic approach can readily apply AC-Off-POC to actor-critic method with one-step bootstrapped Q-learning. Lastly, we provide an open-source repository\footref{our_repo} containing all the code and results to further support research on off-policy deep RL.



\bibliography{main}
\bibliographystyle{tmlr}

\appendix
\section{Proof of Lemma~\ref{lem:difference}}
\label{app:lemma}

\setcounter{theorem}{7}
\begin{lemma}
    The difference between $\mathcal{H}Q$ and its fixed point $Q^{\pi}$ is:
        \begin{equation*}
            \mathcal{H}Q(s, a) - Q^{\pi}(s, a) = \gamma\mathbb{E}_{s^{\prime} \sim P_{\eta}, a^{\prime} \sim \eta}[\mathbb{E}_{\pi}(Q - Q^{\pi})(s^{\prime}, \cdot) - \lambda^{\prime}(Q - Q^{\pi})(s^{\prime}, a^{\prime})],
        \end{equation*}
    where $\lambda^{\prime}$ is the similarity coefficient computed for the subsequent transition of $(s, a, r, s^{\prime})$.
\end{lemma}
\begin{proof}
The proof reduces the proof of Lemma 1 by~\citep{munos_safe} to single-step TD learning when off-policy correction by AC-Off-POC is involved.~\citep{munos_safe} consider the trajectories starting from $t = 0$. Thus, we consider $t = 0$ and denote the entities in the next step by superscript, e.g., $a^{\prime} \coloneqq a_{t + 1}$, since we focus on one-step returns. The general RETRACE operator for multi-step return-based off-policy algorithms is given by:
    \begin{equation}
    \label{eq:munos_general_operator}
        \mathcal{R}Q(s, a) \coloneqq Q(s, a) + \mathbb{E}_{\eta}\left[\sum_{t \geq 0}\gamma^{t}(\prod_{i=1}^{t}c_{i})(r_{t} + \gamma \mathbb{E}_{\pi}Q(s_{t + 1}, \cdot) - Q(s_{t}, a_{t}))\right],
    \end{equation}
    where $c_{i}$ is the importance weight corresponding to the time step $i$. Setting $t = 0$ in the definition of the operator $\mathcal{R}$ yields: 
    \begin{align*}
    \mathcal{R}Q(s,a) &= Q(s,a) + \mathbb{E}_{\eta}\left[\sum_{t \geq 0}\gamma^{t}(\prod_{i = 1}^{t}c_{i})(r_{t} + \gamma\mathbb{E}_{\pi}Q(s_{t+1},\cdot) - Q(s_{t}, a_{t}))\right]\Bigg\rvert_{t = 0}, \\
    &= Q(s,a) + \mathbb{E}_{\eta}\left[r + \gamma\mathbb{E}_{\pi}Q(s^{\prime},\cdot) - Q(s, a)\right], \\
    &= \mathcal{H}Q(s, a),
    \end{align*}
    where $s \coloneqq s_{0}$, $a \coloneqq a_{0}$, $r \coloneqq r_{0}$, $s^{\prime} \coloneqq s_{1}$, and $\prod_{s=1}^{t}c_{s} = 1$ when $t = 0$~\citep{munos_safe}. Notice that reduction to single-step off-policy correction yields the AC-Off-POC operator defined in Definition~\ref{def:exp_op_definition}. As done by~\citep{munos_safe}, we rewrite Equation~\ref{eq:munos_general_operator}:
    \begin{equation*}
        \mathcal{R}Q(s, a) = \sum_{t \geq 0}\gamma^{t}\mathbb{E}_{\eta}\left[(\prod_{s = 1}^{t}c_{s})(r_{t} + \gamma[\mathbb{E}_{\pi}Q(s_{t + 1}, \cdot) - c_{t + 1}Q(s_{t + 1}, a_{t + 1})])\right],
    \end{equation*}
    which is obtained by bootstrapping. Setting $t = 0$ reduces to the following alternative of $\mathcal{H}$ obtained by bootstrapping:
    \begin{align*}
            \mathcal{H}Q(s, a) &= \sum_{t \geq 0}\gamma^{t}\mathbb{E}_{\eta}\left[(\prod_{s = 1}^{t}c_{s})(r_{t} + \gamma[\mathbb{E}_{\pi}Q(s_{t + 1}, \cdot) - c_{t + 1}Q(s_{t + 1}, a_{t + 1})])\right]\Bigg\rvert_{t=0}, \\
            &= \mathbb{E}_{\eta}\left[r + \gamma[\mathbb{E}_{\pi}Q(s^{\prime}, \cdot) - \lambda^{\prime}Q(s^{\prime}, a^{\prime}) ]\right],
    \end{align*}
    where we replace $c_{1}$ with the next similarity weight $\lambda^{\prime}$, which is computed for the subsequent transition of $(s, a, r, s^{\prime})$. The reduction from multi-step IS to one-return off-policy correction theoretically implies that off-policy correction occurs for the expected return of the next state-action pair $(s^{\prime}, a^{\prime})$~\citep{harutyunyan}. 
    As $Q^{\pi}$ is the fixed point of $\mathcal{R}$, and hence, $\mathcal{H}$, we have:
    \begin{equation*}
         \mathcal{H}Q^{\pi}(s, a) = Q^{\pi}(s, a) = \mathbb{E}_{\eta}[r + \gamma[\mathbb{E}_{\pi}Q^{\pi}(s^{\prime}, \cdot) - \lambda^{\prime}Q^{\pi}(s^{\prime}, a^{\prime})]].
    \end{equation*}
    Subtracting the fixed point $Q^{\pi}(s, a)$ from $\mathcal{H}Q(s, a)$, we get:
    \begin{align*}
        \mathcal{H}Q(s, a) - Q^{\pi}(s, a) &= \mathbb{E}_{\eta}\left[r - r + \gamma[\mathbb{E}_{\pi}[Q(s^{\prime}, \cdot) - Q^{\pi}(s^{\prime}, \cdot)]- \lambda^{\prime}(Q(s^{\prime}, a^{\prime}) - Q^{\pi}(s^{\prime}, a^{\prime})) ]\right], \\
        &= \gamma \mathbb{E}_{s^{\prime} \sim P_{\eta}, a^{\prime} \sim \eta}\left[\mathbb{E}_{\pi}\Delta Q(s^{\prime}, \cdot) - \lambda^{\prime}\Delta Q(s^{\prime}, a^{\prime})\right],
    \end{align*}
    where we define $\Delta Q \coloneqq Q - Q^{\pi}$.
    \end{proof}

\end{document}